\newtheorem{prop}{Proposition}
\newtheorem{theor}{Theorem}
\newtheorem{lemma}{Lemma}
\newcommand{\footremember}[2]{%
    \footnote{#2}
    \newcounter{#1}
    \setcounter{#1}{\value{footnote}}%
}
\DeclareMathOperator*{\esssup}{ess\,sup}
\begin{document}
\title{Error bounds for approximations with deep ReLU networks}
\author{Dmitry Yarotsky\footremember{skoltech}{Skolkovo Institute of Science and Technology, Skolkovo Innovation Center, Building 3, Moscow  143026
Russia}\footremember{iitp}{Institute for Information Transmission Problems, Bolshoy Karetny per. 19, build.1, Moscow 127051, Russia}\\
\texttt{d.yarotsky@skoltech.ru}
}
\maketitle

\begin{abstract} We study expressive power of shallow and deep neural networks with piece-wise linear activation functions. We establish new rigorous upper and lower bounds for the network complexity in the setting of approximations in Sobolev spaces. In particular, we prove that deep ReLU networks more efficiently approximate smooth functions than shallow networks. In the case of approximations of 1D Lipschitz functions we describe adaptive depth-6 network architectures more efficient than the standard shallow architecture. 
\end{abstract}

\section{Introduction}
Recently, multiple successful applications of deep neural networks to pattern recognition problems (\cite{schmidhuber2015deep, lecun2015deep}) have revived active interest in theoretical properties of such networks, in particular their expressive power. It has been argued that deep networks may be more expressive than shallow ones of comparable size (see, e.g., \cite{delalleau2011shallow, raghu2016expressive, montufar2014number, bianchini2014complexity, telgarsky2015representation}). In contrast to a shallow network, a deep one can be viewed as a long sequence of non-commutative transformations, which is a natural setting for high expressiveness  (cf. the well-known Solovay-Kitaev theorem on fast approximation of arbitrary quantum operations by sequences of non-commutative gates, see \cite{Kitaev:2002,dawson2006solovay}). 

There are various ways to characterize expressive power of networks. Delalleau and Bengio \citeyear{delalleau2011shallow} consider sum-product networks and prove for certain classes of polynomials  that they are much more easily represented by deep networks than by shallow networks. Montufar et al. \citeyear{montufar2014number} estimate the number of linear regions in the network's landscape. Bianchini and Scarselli  \citeyear{bianchini2014complexity} give bounds for Betti numbers characterizing topological properties of functions represented by networks. Telgarsky \citeyear{telgarsky2015representation, telgarsky2016benefits} provides specific examples of classification problems where deep networks are provably more efficient than shallow ones. 

In the context of classification problems, a general and standard approach to characterizing expressiveness is based on the notion of the Vapnik-Chervonenkis dimension (\cite{vapnik2015uniform}). There exist several bounds for VC-dimension of deep networks with piece-wise polynomial activation functions that go back to geometric techniques of Goldberg and Jerrum \citeyear{goldberg1995bounding} and earlier results of Warren \citeyear{warren1968lower}; see \cite{bartlett1998almost, sakurai1999tight} and the book \cite{anthony2009neural}. There is a related concept, the fat-shattering dimension, for real-valued approximation problems (\cite{kearns1990efficient, anthony2009neural}). 

A very general approach to expressiveness in the context of approximation is the method of nonlinear widths by DeVore et al. \citeyear{devore1989optimal} that concerns approximation of a family of functions under assumption of a continuous dependence of the model on the approximated function. 

In this paper we examine the problem of shallow-vs-deep expressiveness from the perspective of approximation theory and general spaces of functions having derivatives up to certain order (Sobolev-type spaces). In this framework, the problem of expressiveness is very well studied in the case of shallow networks with a single hidden layer, where it is known, in particular, that to approximate a $C^n$-function on a $d$-dimensional set with infinitesimal error $\epsilon$ one needs a network of size about $\epsilon^{-d/n}$, assuming a smooth activation function (see, e.g., \cite{mhaskar1996neural, pinkus1999review} for a number of related rigorous upper and lower bounds and further qualifications of this result). Much less seems to be known  about deep networks in this setting, though Mhaskar et al. \citeyear{mhaskar2016learning, mhaskar2016deep} have recently  introduced functional spaces constructed using deep dependency graphs and obtained expressiveness bounds for related deep networks.     

We will focus our attention on networks with the ReLU activation function $\sigma(x)=\max(0,x)$, which, despite its utter simplicity, seems to be the most popular choice in practical applications \cite{lecun2015deep}. We will consider $L^\infty$-error of approximation of functions belonging to the Sobolev spaces $\mathcal W^{n,\infty}([0,1]^d)$ (without any assumptions of hierarchical structure). We will often consider families of approximations, as the approximated function runs over the unit ball $F_{d,n}$ in $\mathcal W^{n,\infty}([0,1]^d)$. In such cases we will distinguish scenarios of fixed and adaptive network architectures. Our goal is to obtain lower and upper bounds on the expressiveness of deep and shallow networks in different scenarios. We measure complexity of networks in a conventional way, by counting the number of their weights and computation units (cf. \cite{anthony2009neural}).     

The main body of the paper consists of Sections \ref{sec:setting}, \ref{sec:benefit} and \ref{sec:futility}.

In Section \ref{sec:setting} we describe our ReLU network model and show that the ReLU function is replaceable by any other continuous piece-wise linear activation function, up to constant factors in complexity asymptotics (Proposition~\ref{th:piecewise}).

In Section \ref{sec:benefit} we establish several upper bounds on the complexity of approximating by ReLU networks, in particular showing that deep networks are quite efficient for approximating smooth functions. Specifically:
\begin{itemize}
\item In Subsection \ref{sec:squaring} we show that the function $f(x)=x^2$ can be $\epsilon$-approximated by a network of depth and complexity $O(\ln (1/\epsilon))$ (Proposition \ref{th:x2}). This also leads to similar upper bounds on the depth and complexity that are sufficient to implement an approximate multiplication in a ReLU network (Proposition \ref{th:fg}). 
\item In Subsection \ref{sec:gensmooth} we describe a ReLU network architecture of depth $O(\ln (1/\epsilon))$ and complexity $O(\epsilon^{-d/n}\ln (1/\epsilon))$ that is capable of approximating with error $\epsilon$ any function from $F_{d,n}$ (Theorem \ref{th:gensmooth}). 
\item In Subsection \ref{sec:faster} we show that, even with fixed-depth networks, one can further decrease the approximation complexity if the network architecture is allowed to depend on the approximated function. Specifically, we prove that one can $\epsilon$-approximate a given Lipschitz function on the segment $[0,1]$ by a depth-6 ReLU network with $O(\frac{1}{\epsilon \ln (1/\epsilon)})$ connections and activation units (Theorem \ref{th:faster}). This upper bound is of interest since it lies below the lower bound provided by the method of nonlinear widths under assumption of continuous model selection (see Subsection \ref{sec:dwidth}).
\end{itemize}
In Section \ref{sec:futility} we obtain several lower bounds on the complexity of approximation by deep and shallow ReLU networks, using different approaches and assumptions.
\begin{itemize}
\item In Subsection \ref{sec:dwidth} we recall the general lower bound provided by the method of continuous nonlinear widths. This method assumes that parameters of the approximation continuously depend on the approximated function, but does not assume anything about how the approximation depends on its parameters. In this setup, at least $\sim \epsilon^{-d/n}$ connections and weights are required for an $\epsilon$-approximation on $F_{d,n}$ (Theorem \ref{th:dwidth}). As already mentioned, for $d=n=1$ this lower bound is above the upper bound provided by Theorem~\ref{th:faster}.
\item In Subsection \ref{sec:vc} we consider the setup where the same network architecture is used to approximate all functions in $F_{d,n}$, but the weights are not assumed to continuously depend on the function. In this case, application of existing results on VC-dimension of deep piece-wise polynomial networks yields a $\sim \epsilon^{d/(2n)}$ lower bound in general and a $\sim \epsilon^{-d/n}\ln^{-2p-1} (1/\epsilon)$ lower bound if the network depth grows as $O(\ln^p (1/\epsilon))$  (Theorem~\ref{th:vc}).  
\item In Subsection \ref{sec:lb_funcdep} we consider an individual approximation, without any assumptions regarding it as an element of a family as in Subsections \ref{sec:dwidth} and \ref{sec:vc}.  We prove that for any $d,n$ there exists a function in $\mathcal W^{n,\infty}([0,1]^d)$ such that its approximation complexity is not $o(\epsilon^{-d/(9n)})$ as $\epsilon\to 0$ (Theorem \ref{th:lb_funcdep}).
\item In Subsection \ref{sec:slow} we prove that $\epsilon$-approximation of any nonlinear $C^2$-function by a network of fixed depth $L$ requires at least $\sim \epsilon^{-1/(2(L-2))}$ computation units (Theorem \ref{th:slow}). By comparison with Theorem \ref{th:gensmooth}, this shows that for sufficiently smooth functions approximation by fixed-depth ReLU networks is less efficient than by unbounded-depth networks. 
\end{itemize}
In Section \ref{sec:discus} we discuss the obtained bounds and summarize their implications, in particular comparing deep vs. shallow networks and fixed vs. adaptive architectures.

The arXiv preprint 
of the first version of the present work appeared almost simultaneously with the work of Liang and Srikant \cite{liang2016why} containing results partly overlapping with our results in Subsections \ref{sec:squaring},\ref{sec:gensmooth} and \ref{sec:slow}. Liang and Srikant consider networks equipped with both ReLU and threshold activation functions. They prove a logarithmic upper bound for the complexity of approximating the function $f(x)=x^2$, which is analogous to our Proposition \ref{th:x2}. 
Then, they extend this upper bound to polynomials and smooth functions. In contrast to our treatment of generic smooth functions based on standard Sobolev spaces, they impose more complex assumptions on the function (including, in particular, how many derivatives it has) that depend on the required approximation accuracy $\epsilon$. As a consequence, they obtain strong $O(\ln^c(1/\epsilon))$ complexity bounds rather different from our bound in Theorem \ref{th:gensmooth} (in fact, our lower bound proved in Theorem \ref{th:lb_funcdep} rules out, in general, such strong upper bounds for functions having only finitely many derivatives). Also, Liang and Srikant prove a lower bound for the complexity of approximating convex functions by shallow networks. Our version of this result, given in Subsection \ref{sec:slow}, is different in that we assume smoothness and nonlinearity instead of global convexity.

\section{The ReLU network model}\label{sec:setting}

\begin{figure}
\begin{center}
\includegraphics[width=0.4\textwidth, trim= 15mm 10mm 10mm 10mm,clip]{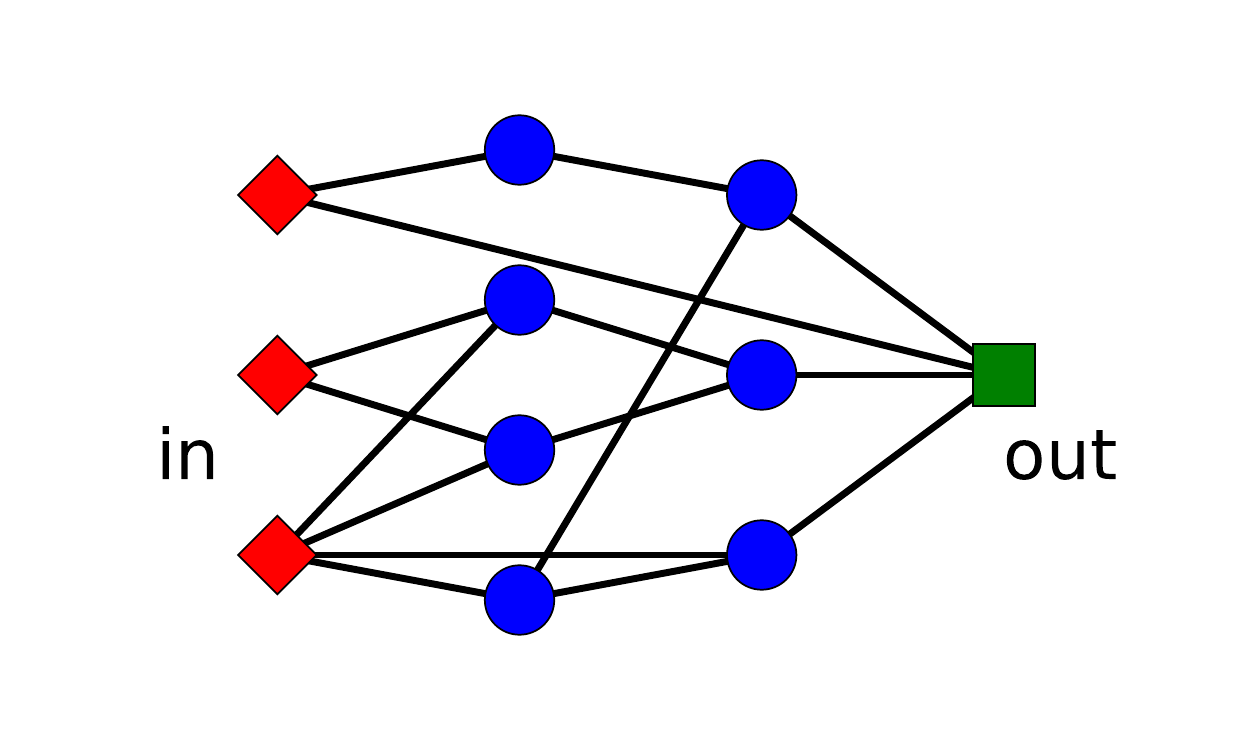}
\end{center}
\caption{A feedforward neural network having 3 input units (diamonds), 1 output unit (square), and 7 computation units with nonlinear activation (circles). The network has 4 layers and $16+8=24$ weights.}\label{nn}
\end{figure}

Throughout the paper, we consider feedforward neural networks with the ReLU (Rectified Linear Unit) activation function $$\sigma(x)=\max(0,x).$$
The network consists of several input units, one output unit, and a number of ``hidden'' computation units. Each hidden unit performs an operation of the form 
\begin{equation}\label{eq:relu}
y=\sigma\Big(\sum_{k=1}^N w_kx_k+b\Big)
\end{equation} 
with some weights (adjustable parameters) $(w_k)_{k=1}^N$ and $b$ depending on the unit. The output unit is also a computation unit, but without the nonlinearity, i.e., it computes $y=\sum_{k=1}^N w_kx_k+b$. The units are grouped in layers, and  the inputs $(x_k)_{k=1}^N$ of a computation unit in a certain layer are outputs of some units belonging to any of the preceding layers (see Fig. \ref{nn}). Note that we allow connections between units in non-neighboring layers. Occasionally, when this cannot cause confusion, we may denote the network and the function it implements by the same symbol.

The depth of the network, the number of units and the total number of weights are standard measures of network complexity (\cite{anthony2009neural}). We will use these measures throughout the paper. The number of weights is, clearly, the sum of the total number of connections and the number of computation units. We identify the depth with the number of layers (in particular, the most common  type of neural networks -- shallow networks having a single hidden layer -- are depth-3 networks according to this convention). 

We finish this subsection with a proposition showing that, given our complexity measures, using the ReLU activation function is not much different from using any other piece-wise linear activation function with finitely many breakpoints:  one can replace one network by an equivalent one but having another activation function while only increasing the number of units and weights by constant factors. This justifies our restricted attention to the ReLU networks (which could otherwise have been perceived as an excessively particular example of networks).  

\begin{prop}\label{th:piecewise} Let $\rho:\mathbb R\to\mathbb R$ be any continuous piece-wise linear function with $M$ breakpoints, where $1\le M<\infty$. 

\begin{itemize}
\item [a)] Let $\xi$ be a network with the activation function $\rho$, having depth $L$, $W$ weights and $U$ computation units. Then there exists a ReLU network $\eta$ that has depth $L$, not more than $(M+1)^2W$ weights and not more than $(M+1)U$ units, and that computes the same function as $\xi$. 
\item [b)] Conversely, let $\eta$ be a ReLU network of depth $L$ with $W$ weights and $U$ computation units. Let $\mathcal D$ be a bounded subset of $\mathbb R^n$, where $n$ is the input dimension of $\eta$. Then there exists a network with the activation function $\rho$ that has depth $L$, $4W$ weights and $2U$ units, and that computes the same function as $\eta$ on the set $\mathcal D$. 
\end{itemize}
\end{prop}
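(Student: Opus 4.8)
The plan is to exploit the fact that any continuous piece-wise linear function with finitely many breakpoints can be written as an affine function plus a finite linear combination of ReLU units, and conversely that the single ReLU function can be recovered from any such $\rho$ by taking an appropriate affine combination of two shifted/scaled copies of $\rho$ near one of its breakpoints.

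For part (a), first I would record the elementary representation: if $\rho$ has breakpoints $t_1<\dots<t_M$ with slopes $a_0,a_1,\dots,a_M$ on the successive intervals, then $\rho(x)=c+a_0 x+\sum_{i=1}^M (a_i-a_{i-1})\,\sigma(x-t_i)$ for suitable constant $c$. Thus a single $\rho$-unit, which computes $\sigma$ replaced by $\rho$ applied to an affine form $\ell=\sum_k w_k x_k+b$, can be simulated by $M$ ReLU units (computing $\sigma(\ell-t_i)$ in parallel, all in the same layer as the original unit sat), plus a pass-through of the affine term $a_0\ell+c$; the latter can be folded into the weights of the \emph{downstream} units that consumed the original output. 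So each $\rho$-unit becomes at most $M+1$ ReLU units occupying the same layer, hence depth is preserved and the unit count is at most $(M+1)U$. Counting weights is then the routine bookkeeping: each incoming connection to the original unit gets duplicated across the $\le M+1$ replacement units, and each outgoing connection likewise may be replaced by $\le M+1$ connections after the fan-out, giving the stated $(M+1)^2 W$ bound. I would present this carefully but the argument is just linear algebra.

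For part (b), the key observation is that since $\rho$ is piece-wise linear with at least one breakpoint, there is a breakpoint $t$ where the left slope $a_{-}$ and right slope $a_{+}$ differ, and in a neighborhood of $t$ one has $\rho(x) = a_{-}(x-t) + (a_{+}-a_{-})\sigma(x-t) + \rho(t)$ locally, so that $\sigma(x-t)$ is an affine combination of $\rho$ evaluated at two affine arguments, valid for $x$ in a bounded interval around $t$. Because $\mathcal D$ is bounded and $\eta$ has finitely many units, all the pre-activation values occurring in $\eta$ over inputs in $\mathcal D$ lie in some bounded interval; rescaling the affine argument by a large constant $\lambda$ pushes any bounded range of pre-activations into the region near $t$ where the local identity holds, and then dividing the output by $\lambda$ restores $\sigma$ exactly on that range. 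Concretely, each ReLU unit is replaced by two $\rho$-units (to form the two affine evaluations of $\rho$), and their outputs are combined affinely into the downstream weights, giving depth $L$, at most $2U$ units and at most $4W$ weights. I would note that the scaling constants depend on $\mathcal D$ and on the (finitely many, hence uniformly bounded) weights of $\eta$, but this is allowed since we only need equality on $\mathcal D$.

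The main obstacle is not any single deep idea but the careful handling of the "pass-through" affine parts and the fan-in/fan-out accounting so that the weight and unit bounds come out exactly as $(M+1)^2 W$, $(M+1)U$ in (a) and $4W$, $2U$ in (b) — in particular making sure the linear term $a_0\ell$ in (a) is absorbed without creating an extra layer, and in (b) making sure the two $\rho$-evaluations and the affine recombination genuinely fit into $2$ units and $4$ weights per original unit rather than more. I would also be careful that "breakpoint" is interpreted so that $M\ge 1$ guarantees a genuine slope change somewhere, which is what part (b) needs; if $\rho$ were affine there would be no breakpoint and the hypothesis $M\ge 1$ rules this out.
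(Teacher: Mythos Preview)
Your plan for part (b) is essentially the paper's argument: one evaluates $\rho$ at two affinely shifted/scaled arguments near a genuine breakpoint, one centered at the breakpoint and the other shifted so as to lie entirely in a linear region of $\rho$, and the difference (after scaling) recovers $\sigma$ exactly on any prescribed bounded range. Your write-up is a bit elliptic about why \emph{two} evaluations suffice (your local formula $\rho(x)=a_-(x-t)+(a_+-a_-)\sigma(x-t)+\rho(t)$ still has the linear term $a_-(x-t)$ to dispose of), but the mechanism you describe---scaling into a neighborhood of the breakpoint and taking an affine combination---is exactly right and yields the $2U$, $4W$ counts.

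For part (a), however, there is a real gap, and it is precisely the point you flag as ``the main obstacle'' without resolving. Your representation
\[
\rho(x)=c+a_0x+\sum_{i=1}^M (a_i-a_{i-1})\,\sigma(x-t_i)
\]
carries the linear term $a_0x$. You propose to fold $a_0\ell+c$ into the downstream weights, but $\ell$ is an affine combination of outputs of \emph{earlier} $\rho$-units, each of which you have also replaced and each of which carries its own linear pass-through $a_0\ell'$. Unrolling these pass-throughs along the depth, a replacement ReLU at layer $k$ ends up needing direct connections to the input \emph{and} to all replacement ReLUs in layers $<k$. In a chain network of $L$ units with $W=\Theta(L)$ this produces $\Theta(M^2L^2)$ connections, not $O((M+1)^2L)$. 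So the folding is not mere bookkeeping; the $(M+1)^2W$ bound fails for your decomposition whenever $a_0\ne 0$.

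The paper avoids this with a slightly different decomposition,
\[
\rho(x)=c_0\,\sigma(a_1-x)+\sum_{m=1}^M c_m\,\sigma(x-a_m)+h,
\]
using $M+1$ ReLU terms (one with a \emph{reflected} argument $\sigma(a_1-x)$ to absorb the leftmost slope) and leaving only a \emph{constant} remainder $h$. Constants are absorbed into downstream biases with no cascading, and then every connection is replaced at most once as incoming and once as outgoing, for a factor $(M+1)^2$. Adding this single reflected ReLU is the missing ingredient in your part (a).
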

\begin{proof}
a) Let $a_1<\ldots<a_M$ be the breakpoints of $\rho$, i.e., the points where its derivative is discontinuous: $\rho'(a_k+)\ne \rho'(a_k-)$. We can then express $\rho$ via the ReLU function $\sigma$, as a linear combination
$$\rho(x) = c_0\sigma(a_1-x)+\sum_{m=1}^Mc_m\sigma(x-a_m)+h$$
with appropriately chosen coefficients $(c_m)_{m=0}^M$ and $h$. It follows that computation performed by a single $\rho$-unit, $$x_1,\ldots,x_N \mapsto \rho\Big(\sum_{k=1}^N w_kx_k+b\Big),$$
can be equivalently represented by a linear combination of a constant function and computations of $M+1$ $\sigma$-units,
$$x_1,\ldots,x_N \mapsto 
\begin{cases}
\sigma\Big(\sum_{k=1}^N w_kx_k+b-a_m\Big), & m=1,\ldots,M,\\
\sigma\Big(a_1-b-\sum_{k=1}^N w_kx_k), & m=0
\end{cases}$$
(here $m$ is the index of a $\rho$-unit).
We can then replace one-by-one all the $\rho$-units in the network $\xi$ by $\sigma$-units, without changing the output of the network. Obviously, these replacements do not change the network depth. Since each hidden unit gets replaced by $M+1$ new units, the number of units in the new network is not greater than $M+1$ times their number in the original network. Note also that the number of connections in the network is multiplied, at most, by $(M+1)^2$. Indeed, each unit replacement entails replacing each of the incoming and outgoing connections of this unit by $M+1$ new connections, and each connection is replaced twice: as an incoming and as an outgoing one. These considerations imply the claimed complexity bounds for the resulting $\sigma$-network $\eta$.     

b) Let $a$ be any breakpoint of $\rho$, so that $\rho'(a+)\ne \rho'(a-)$. 
Let $r_0$ be the distance separating $a$ from the nearest other breakpoint, so that $\rho$ is linear on $[a,a+r_0]$ and on $[a-r_0,a]$ (if $\rho$ has only one node, any $r_0>0$ will do). Then, for any $r>0$, we can express the ReLU function $\sigma$ via $\rho$ in the $r$-neighborhood of 0:
$$\sigma(x) = \frac{\rho\big(a+\frac{r_0}{2r}x\big)-\rho\big(a-\frac{r_0}{2}+\frac{r_0}{2r}x\big)-\rho(a)+\rho\big(a-\frac{r_0}{2}\big)}{\big(\rho'(a+)-\rho'(a-)\big)\frac{r_0}{2r}},\qquad x\in[-r, r].$$ 
It follows that a computation performed by a single $\sigma$-unit,
$$x_1,\ldots,x_N \mapsto \sigma\Big(\sum_{k=1}^N w_kx_k+b\Big),$$
can be equivalently represented by a linear combination of a constant function and two $\rho$-units,
$$x_1,\ldots,x_N \mapsto 
\begin{cases}
\rho\Big(a+\frac{r_0}{2r}b+\frac{r_0}{2r}\sum_{k=1}^N w_kx_k\Big), \\
\rho\Big(a-\frac{r_0}{2}+\frac{r_0}{2r}b+\frac{r_0}{2r}\sum_{k=1}^N w_kx_k\Big), 
\end{cases}$$
provided the condition \begin{equation}\label{eq:plf}\sum_{k=1}^N w_kx_k+b\in [-r,r]\end{equation} holds. Since $\mathcal D$ is a bounded set, we can choose $r$ at each unit of the initial network $\eta$ sufficiently large so as to satisfy condition \eqref{eq:plf} for all network inputs from $\mathcal D$. Then, like in a), we replace each $\sigma$-unit with two $\rho$-units, which produces the desired $\rho$-network. 
\end{proof}

\section{Upper bounds
}\label{sec:benefit}
Throughout the paper, we will be interested in approximating functions $f:[0,1]^d\to\mathbb R$ by ReLU networks. Given a function $f:[0,1]^d\to\mathbb R$ and its approximation $\widetilde f$, by the \emph{approximation error} we will always mean the uniform maximum error $$\|f-\widetilde f\|_\infty=\max_{\mathbf x\in[0,1]^d}|f(\mathbf x)-\widetilde f(\mathbf x)|.$$

\subsection{Fast deep approximation of squaring and multiplication}\label{sec:squaring}

Our first key result shows that ReLU networks with unconstrained depth can very efficiently approximate the function $f(x)=x^2$ (more efficiently than any fixed-depth network, as we will see in Section \ref{sec:slow}). Our construction uses the ``sawtooth'' function that has previously appeared in the paper \cite{telgarsky2015representation}.   

\begin{prop}\label{th:x2}
The function $f(x)=x^2$ on the segment $[0,1]$ can be approximated with any error $\epsilon>0$  by a ReLU network having the depth and the number of weights and computation units $O(\ln (1/\epsilon))$.
\end{prop}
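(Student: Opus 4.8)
The plan is to build the network from iterated copies of a single ``tooth'' function and to exploit the fact that the piece-wise linear interpolants of $x^2$ on dyadic grids telescope into a series whose terms are exactly these iterates. Concretely, let $g:[0,1]\to[0,1]$ be the tent map, $g(x)=2x$ for $x\le\frac12$ and $g(x)=2-2x$ for $x\ge\frac12$; on $[0,1]$ it is realized by a single hidden layer, $g(x)=2\sigma(x)-4\sigma(x-\frac12)$. Writing $g_s$ for the $s$-fold composition $g\circ\cdots\circ g$, the function $g_s$ is the ``sawtooth'' with $2^{s-1}$ uniform teeth: it vanishes on the grid $\{k2^{-(s-1)}\}$ and equals $1$ at the midpoints $\{(k+\frac12)2^{-(s-1)}\}$.

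Next I would record the telescoping identity. Let $f_s$ denote the piece-wise linear interpolant of $f(x)=x^2$ at the nodes $k2^{-s}$, $k=0,\dots,2^s$; in particular $f_0(x)=x$. A one-line computation shows that when one refines $f_{s-1}$ to $f_s$, the correction introduced on each length-$2^{-(s-1)}$ subinterval is piece-wise linear, vanishes at the old nodes, and equals $2^{-2s}$ at the newly inserted midpoint, so that $f_{s-1}-f_s=2^{-2s}g_s$. Summing, $f_m(x)=x-\sum_{s=1}^m 2^{-2s}g_s(x)$. Since $f$ is convex with $f''\equiv2$, the standard error bound for piece-wise linear interpolation gives $\|f_m-f\|_\infty\le 2^{-2m-2}$; hence it suffices to take $m=\lceil\frac12\log_2(1/\epsilon)\rceil=O(\ln(1/\epsilon))$.

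Finally I would assemble the network implementing $f_m$: a chain of $m$ blocks, the $s$-th block receiving (a representation of) $g_{s-1}(x)$ and producing $g_s(x)$ with a bounded number of $\sigma$-units and a bounded number of incoming connections, together with the input $x$ routed to the output unit by a skip connection of weight $1$ and each $g_s$ routed to the output unit with weight $-2^{-2s}$ (recall the model permits connections between non-neighboring layers). The output unit then computes precisely $f_m$. The depth is $m+2$, the number of units is $O(m)$, and the number of weights is $O(m)$, all $O(\ln(1/\epsilon))$, which is the claim.

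The only genuinely non-routine ingredient is the identity $f_{s-1}-f_s=2^{-2s}g_s$ linking the dyadic interpolants of $x^2$ to the iterated tent map; once this is in hand, the error estimate and the complexity count are elementary bookkeeping. A minor point to watch is that the two-$\sigma$-unit formula for $g$ is valid only for arguments in $[0,1]$, which is harmless here since $g$ maps $[0,1]$ into itself and $x\in[0,1]$, so every composition stays in range.
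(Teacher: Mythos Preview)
Your proposal is correct and follows essentially the same route as the paper's proof: the tent map $g$, its iterates $g_s$, the telescoping identity $f_{s-1}-f_s=2^{-2s}g_s$ yielding $f_m(x)=x-\sum_{s=1}^m 2^{-2s}g_s(x)$, the interpolation error $2^{-2m-2}$, and the $O(m)$ network built by chaining copies of $g$ with skip connections to the output. The only cosmetic difference is that you realize $g$ on $[0,1]$ with two $\sigma$-units, $g(x)=2\sigma(x)-4\sigma(x-\tfrac12)$, whereas the paper uses three, $g(x)=2\sigma(x)-4\sigma(x-\tfrac12)+2\sigma(x-1)$; your observation that $g$ maps $[0,1]$ into itself justifies the two-unit version and is a nice touch.
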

\begin{proof}

Consider the ``tooth'' (or ``mirror'') function $g:[0,1]\to [0,1],$
$$g(x)=\begin{cases}2x, & x<\frac{1}{2},\\ 2(1-x), & x\ge\frac{1}{2},\end{cases}$$
and the iterated  functions
$$g_s(x) = \underbrace{g\circ g\circ\cdots \circ g}_s(x).$$
Telgarsky has shown (see Lemma 2.4 in \cite{telgarsky2015representation}) that $g_s$ is a ``sawtooth'' function with $2^{s-1}$ uniformly distributed ``teeth'' (each application of $g$ doubles the number of teeth):
$$g_s(x)=
\begin{cases}2^s\big(x-\frac{2k}{2^s}\big), & x\in\big[\frac{2k}{2^s}, \frac{2k+1}{2^s}], k=0,1,\ldots, 2^{s-1}-1,\\ 
2^s\big(\frac{2k}{2^s}-x\big), & x\in\big[\frac{2k-1}{2^s}, \frac{2k}{2^s}], k=1,2,\ldots, 2^{s-1},\end{cases}$$
(see Fig. \ref{fig:prop1a}). Our key observation now is that the function $f(x)=x^2$ can be approximated by linear combinations of the functions $g_s$. Namely, let $f_m$ be the piece-wise linear interpolation of $f$ with $2^{m}+1$ uniformly distributed breakpoints $\frac{k}{2^m}, k=0, \ldots, 2^m$:
$$f_m\Big(\frac{k}{2^m}\Big) = \Big(\frac{k}{2^m}\Big)^2,\quad k=0,\ldots, 2^m$$
(see Fig. \ref{fig:prop1b}). The function $f_m$ approximates $f$ with the error $\epsilon_m =2^{-2m-2}$. Now note that refining the interpolation from $f_{m-1}$ to $f_m$ amounts to adjusting it by a function proportional to a sawtooth function: $$f_{m-1}(x)-f_{m}(x)=\frac{g_m(x)}{2^{2m}}.$$ Hence
$$f_m(x)=x-\sum_{s=1}^m \frac{g_s(x)}{2^{2s}}.$$
Since $g$ can be implemented by a finite ReLU network (as $g(x)=2\sigma(x)-4\sigma\big(x-\frac{1}{2}\big)+2\sigma(x-1)$) and since construction of $f_m$ only involves $O(m)$ linear operations and compositions of $g$, we can implement $f_m$ by a ReLU network having depth and the number of weights and computation units all being $O(m)$ (see Fig. \ref{fig:prop1c}). This implies the claim of the proposition.  

\begin{figure}
\begin{center}
    \begin{subfigure}[b]{0.3\textwidth}
        \includegraphics[width=\textwidth, trim={0mm 0mm 0mm 0mm},clip]{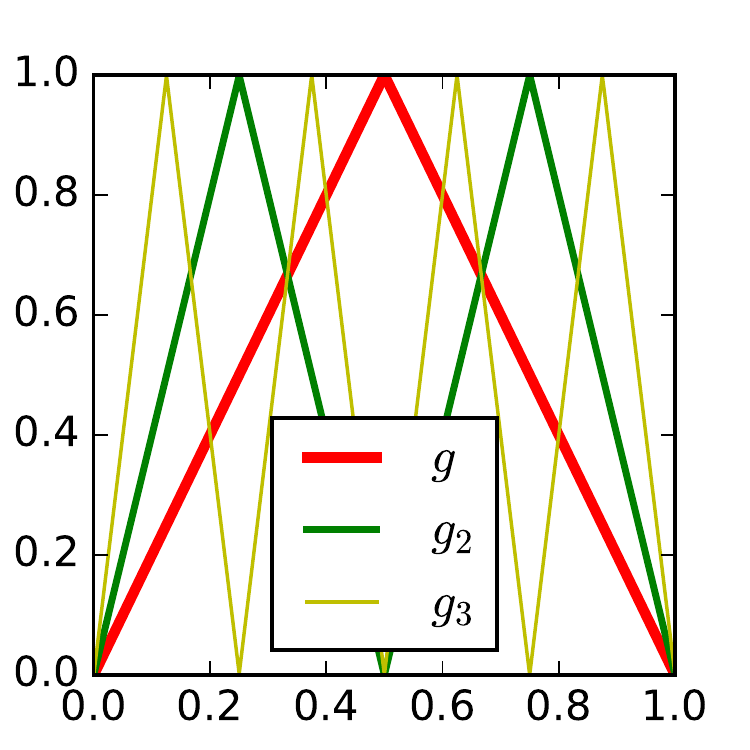}
        \caption{}
        \label{fig:prop1a}
    \end{subfigure}    
    \begin{subfigure}[b]{0.3\textwidth}
        \includegraphics[width=\textwidth, trim={0mm 0mm 0mm 0mm},clip]{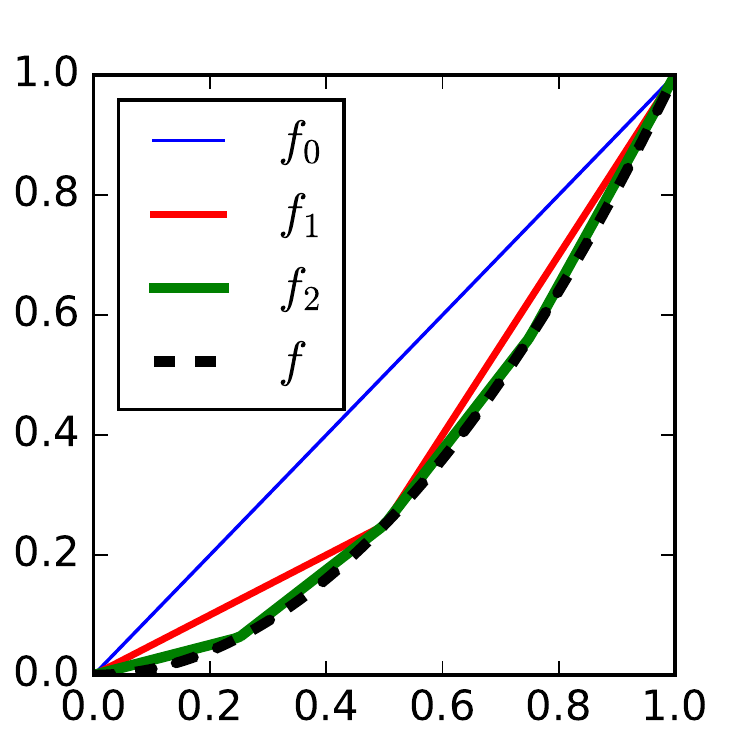}
        \caption{}
        \label{fig:prop1b}
    \end{subfigure}
    \begin{subfigure}[b]{0.3\textwidth}
        \includegraphics[width=\textwidth, trim={15mm 10mm 15mm 10mm},clip]{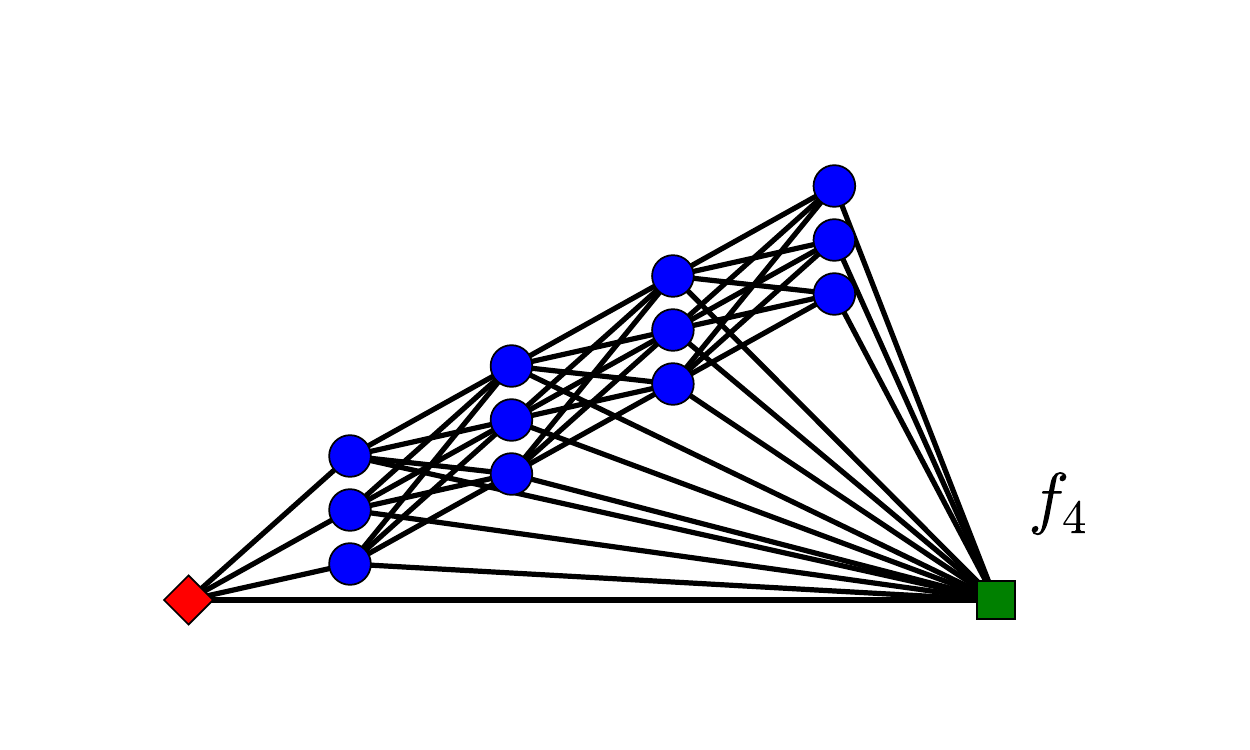}
        \caption{}
        \label{fig:prop1c}
    \end{subfigure}
\caption{Fast approximation of the function $f(x)=x^2$ from Proposition \ref{th:x2}: (a) the ``tooth'' function $g$ and the iterated ``sawtooth'' functions $g_2, g_3$; (b) the approximating functions $f_m$; (c) the network architecture for $f_4$.}
\label{fig:prop1}
\end{center}
\end{figure}

\end{proof}

Since  
\begin{equation}\label{fg}
xy=\frac{1}{2}((x+y)^2-x^2-y^2),
\end{equation} 
we can use Proposition \ref{th:x2} to efficiently implement accurate multiplication in a ReLU network. The implementation will depend on the required accuracy and the magnitude of the multiplied quantities.

\begin{prop}\label{th:fg}
Given $M>0$ and $\epsilon \in (0,1)$, there is a ReLU network $\eta$ with two input units that implements a function $\widetilde\times:\mathbb R^2\to\mathbb R$ so that
\begin{enumerate}
\item[a)] for any inputs $x,y$, if $|x|\le M$ and $|y|\le M,$ then $|\widetilde\times(x,y)-xy|\le \epsilon$;
\item[b)] if $x=0$ or $y=0$, then $\widetilde\times(x,y)=0$;
\item[c)] the depth and the number of weights and computation units in $\eta$ is not greater than $c_1\ln(1/\epsilon)+c_2$ with an absolute constant $c_1$ and a constant $c_2=c_2(M)$.
\end{enumerate}
\end{prop}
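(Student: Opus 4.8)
The plan is to derive Proposition~\ref{th:fg} from Proposition~\ref{th:x2} via the polarization identity \eqref{fg}, after a rescaling step that reduces the problem to squaring on the unit interval.

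First I would introduce the network $\mathrm{sq}_\delta$ approximating $x\mapsto x^2$ on $[0,1]$ with error $\delta$ (depth and size $O(\ln(1/\delta))$) from Proposition~\ref{th:x2}, and extend it to an even function on $[-1,1]$ by precomposing with $x\mapsto|x| = \sigma(x)+\sigma(-x)$; this costs only one extra layer and $O(1)$ units and preserves the error bound since $x^2=|x|^2$. Then, for a magnitude bound $M$, I would rescale: define $S_M(t) = (2M)^2\,\mathrm{sq}_\delta\!\big(\tfrac{t}{2M}\big)$, which approximates $t\mapsto t^2$ on $[-2M,2M]$ with error $(2M)^2\delta$. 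Since we want to apply this to $t\in\{x+y,x,y\}$ with $|x|,|y|\le M$, all three arguments lie in $[-2M,2M]$, so feeding these three affine combinations of the two inputs into three parallel copies of $S_M$ and combining by $\widetilde\times(x,y) = \tfrac12\big(S_M(x+y)-S_M(x)-S_M(y)\big)$ yields, by \eqref{fg} and the triangle inequality, $|\widetilde\times(x,y)-xy|\le \tfrac12\cdot 3\cdot(2M)^2\delta = 6M^2\delta$. Choosing $\delta = \epsilon/(6M^2)$ gives part~(a), and the depth/size of the whole network is $O(\ln(1/\delta)) = O(\ln(1/\epsilon)+\ln M) = c_1\ln(1/\epsilon)+c_2(M)$, which is part~(c).

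The remaining issue is part~(b), the exact identity $\widetilde\times(x,y)=0$ when $x=0$ or $y=0$. This does not hold automatically for the construction above, because $S_M(x+y)-S_M(x)-S_M(y)$ need not vanish identically on the coordinate axes (the piecewise-linear interpolant $f_m$ from Proposition~\ref{th:x2} satisfies $f_m(0)=0$, so $S_M(0)=0$ and in fact $S_M$ is exactly even, giving $S_M(x+0)-S_M(x)-S_M(0)=0$ — so actually the identity along $y=0$, and by symmetry along $x=0$, does hold as long as $S_M$ is built from an exactly even, exactly-zero-at-zero approximant). I would therefore emphasize in the proof that the approximant of Proposition~\ref{th:x2} can be taken to satisfy $f_m(0)=0$ exactly (it does: $f_m(0)=0^2=0$) and that precomposition with $|\cdot|$ makes it exactly even, so that the polarization combination telescopes to $0$ identically on the axes. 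This bookkeeping — making sure the error-control rescaling is consistent with the exact vanishing property — is the main (mild) obstacle; everything else is routine composition and counting of weights, using that affine preprocessing, parallel composition of $O(1)$ subnetworks, and a final affine combination each change depth and size by at most constant amounts.
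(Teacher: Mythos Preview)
Your proposal is correct and follows essentially the same route as the paper: both use the polarization identity \eqref{fg}, precompose the squaring approximant of Proposition~\ref{th:x2} with $x\mapsto|x|$, rescale by $2M$ so that all three arguments land in $[0,1]$, and rely on the exact property $f_m(0)=0$ to obtain the vanishing on the axes in part~(b). The paper's write-up is terser (it simply asserts property~(b) is ``immediate'' from $\widetilde f_{\mathrm{sq},\delta}(0)=0$), while you spell out the telescoping $S_M(x+0)-S_M(x)-S_M(0)=0$; note that evenness is not actually needed for this step, only $S_M(0)=0$.
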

\begin{proof}
Let $\widetilde f_{\mathrm{sq}, \delta}$ be the approximate squaring function from Proposition \ref{th:x2} such that $\widetilde f_{\mathrm{sq}, \delta}(0)=0$ and $|\widetilde f_{\mathrm{sq}, \delta}(x)-x^2|<\delta$ for $x\in [0,1]$. Assume without loss of generality that $M\ge 1$ and set 
\begin{equation}\label{etafg}
\widetilde\times(x,y)=\frac{M^2}{8}\bigg(\widetilde f_{\mathrm{sq}, \delta}\Big(\frac{|x+y|}{2M}\Big)-\widetilde f_{\mathrm{sq}, \delta}\Big(\frac{|x|}{2M}\Big)-\widetilde f_{\mathrm{sq}, \delta}\Big(\frac{|y|}{2M}\Big)\bigg),
\end{equation}
where $\delta=\frac{8\epsilon}{3M^2}$. Then property b) is immediate and a) follows easily using expansion \eqref{fg}. To conclude c), observe that  computation \eqref{etafg} consists of three instances of $\widetilde f_{\mathrm{sq}, \delta}$  and finitely many linear and ReLU operations, so, using Proposition \ref{th:x2}, we can implement $\widetilde\times$ by a ReLU network such that its depth and the number of computation units and weights are  $O(\ln(1/\delta))$, i.e. are $O(\ln(1/\epsilon)+\ln M)$.
\end{proof}

\subsection{Fast deep approximation of general smooth functions}\label{sec:gensmooth}
In order to formulate our general result, Theorem \ref{th:gensmooth}, we consider the Sobolev spaces $\mathcal W^{n,\infty}([0,1]^d)$ with $n=1,2,\ldots$ Recall that $\mathcal W^{n,\infty}([0,1]^d)$ is defined as the space of functions on $[0,1]^d$ lying in $L^\infty$ along with their weak derivatives  up to order $n$. The norm in $\mathcal W^{n,\infty}([0,1]^d)$ can be defined by
$$\|f\|_{\mathcal W^{n,\infty}([0,1]^d)}= \max_{\mathbf{n}:|\mathbf{n}|\le n}\esssup_{\mathbf{x}\in [0,1]^d}|D^\mathbf{n}f(\mathbf{x})|,$$
where $\mathbf{n}=(n_1,\ldots,n_d)\in \{0,1,\ldots\}^d$, $|\mathbf{n}|=n_1+\ldots+n_d$, and $D^{\mathbf n}f$ is the respective weak derivative. Here and in the sequel we denote vectors by boldface characters. The space $\mathcal W^{n,\infty}([0,1]^d)$ can be equivalently described as consisting of the functions from $C^{n-1}([0,1]^d)$ such that all their derivatives of order $n-1$ are Lipschitz continuous.

Throughout the paper, we denote by $F_{n,d}$ the unit ball in $\mathcal W^{n,\infty}([0,1]^d)$:
$$F_{n,d}=\{f\in \mathcal W^{n,\infty}([0,1]^d): \|f\|_{\mathcal W^{n,\infty}([0,1]^d)}\le 1\}.$$

Also, it will now be convenient to make a distinction between \emph{networks} and \emph{network architectures}: we define the latter as the former with unspecified weights. We say that a network architecture \emph{is capable of expressing any function from $F_{d,n}$ with error $\epsilon$} meaning that this can be achieved by some weight assignment. 

\begin{theor}\label{th:gensmooth}
For any $d,n$ and $\epsilon\in (0,1)$, there is a ReLU network architecture that 
\begin{enumerate}
\item is capable of expressing any function from $F_{d,n}$ with error $\epsilon$;
\item has the depth at most $c(\ln (1/\epsilon)+1)$ and at most $c\epsilon^{-d/n}(\ln (1/\epsilon)+1)$ weights and computation units, with some constant $c=c(d,n)$.
\end{enumerate}
\end{theor}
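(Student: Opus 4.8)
The plan is to combine a standard local polynomial approximation with the efficient network implementation of multiplication from Proposition~\ref{th:fg}. The overall strategy is: partition $[0,1]^d$ into small cells, approximate $f$ locally by its Taylor polynomial on each cell, patch these local polynomials together using a partition of unity built from ReLU networks, and finally replace all the multiplications involved (both in evaluating monomials and in combining with the partition of unity) by the approximate multiplication $\widetilde\times$. The logarithmic depth of $\widetilde\times$ is what produces the $\ln(1/\epsilon)$ factors in the bounds, while the number of cells, $O(\epsilon^{-d/n})$, produces the main $\epsilon^{-d/n}$ factor.

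First I would fix a grid of spacing $\frac1N$ (with $N\sim\epsilon^{-1/n}$ up to constants) and, for each grid point $\mathbf m/N$, build a ``bump'' function $\phi_{\mathbf m}$ supported near $\mathbf m/N$ such that $\sum_{\mathbf m}\phi_{\mathbf m}\equiv 1$ on $[0,1]^d$; the natural choice is a product of one-dimensional triangular (hat) functions, each of which is exactly representable by a small fixed-size ReLU network. Then I would take the degree-$(n-1)$ Taylor polynomial $P_{\mathbf m}$ of $f$ at $\mathbf m/N$; since $f\in F_{d,n}$, the Taylor remainder bound gives $|f(\mathbf x)-P_{\mathbf m}(\mathbf x)|\le C(d,n)N^{-n}$ for $\mathbf x$ in the support of $\phi_{\mathbf m}$, and the coefficients of $P_{\mathbf m}$ are uniformly bounded. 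The target function is $\widetilde f=\sum_{\mathbf m}\phi_{\mathbf m}P_{\mathbf m}$, which then satisfies $\|f-\widetilde f\|_\infty\le C(d,n)N^{-n}\le\frac\epsilon2$ once $N$ is chosen appropriately, using that the $\phi_{\mathbf m}$ are nonnegative and sum to $1$.

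Next I would implement $\widetilde f$ by a network. Each monomial of degree $\le n-1$ in $d$ variables can be computed approximately by composing $O(n)$ copies of $\widetilde\times$, so a single $\phi_{\mathbf m}P_{\mathbf m}$ — a product of the $d$ hat functions forming $\phi_{\mathbf m}$ with a bounded number of monomials — is implementable, up to error $O(\epsilon)$ scaled suitably, by a network of depth $O(\ln(1/\epsilon))$ and size $O(\ln(1/\epsilon))$ with constants depending on $d,n$. One should be slightly careful to exploit property (b) of $\widetilde\times$ (exact zero when a factor is zero), so that the contribution of cell $\mathbf m$ genuinely vanishes outside the support of $\phi_{\mathbf m}$ and the errors do not accumulate over all $\sim N^d$ cells: at any given $\mathbf x$ only $O(1)$ cells (those in the neighboring grid blocks) contribute, so the total error from replacing exact multiplication by $\widetilde\times$ is $O(\epsilon)$ with an absolute-type constant, not $O(N^d\epsilon)$. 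Summing the $\sim N^d = O(\epsilon^{-d/n})$ subnetworks in parallel and adding their outputs gives total depth $O(\ln(1/\epsilon)+1)$ and total size $O(\epsilon^{-d/n}(\ln(1/\epsilon)+1))$, as claimed; the architecture (which cells, which monomials, which hat functions) depends only on $d,n,\epsilon$ and not on $f$, so it is a single architecture expressing all of $F_{d,n}$.

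The main obstacle is the error-accumulation bookkeeping in the last step: one must verify that the local nature of the partition of unity really does keep the aggregate multiplication error bounded independently of the number of cells, and simultaneously choose the internal accuracy $\delta$ of each $\widetilde\times$ (a polynomially small function of $\epsilon$, with exponent depending on $d,n$) so that the $\ln(1/\delta)=O(\ln(1/\epsilon))$ estimate still holds with the stated constants. A secondary technical point is handling the boundary cells of $[0,1]^d$ so that the $\phi_{\mathbf m}$ still form an exact partition of unity there, and checking that the Taylor coefficients stay uniformly bounded so that all quantities fed into $\widetilde\times$ lie in a fixed range $[-M,M]$ with $M=M(d,n)$, allowing the constant $c_2(M)$ in Proposition~\ref{th:fg} to be absorbed into $c(d,n)$.
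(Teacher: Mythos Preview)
Your proposal is correct and follows essentially the same route as the paper: a partition of unity built from one-dimensional piecewise-linear bumps, local degree-$(n-1)$ Taylor polynomials, chained applications of the approximate multiplication $\widetilde\times$ from Proposition~\ref{th:fg}, and the crucial use of property (b) of $\widetilde\times$ to ensure that at each $\mathbf x$ only the $2^d$ neighboring cells contribute to the error. The only cosmetic difference is that the paper uses trapezoidal bumps $\psi$ (flat on $[-1,1]$, supported on $[-2,2]$) rather than triangular hats, which automatically handles the boundary-cell issue you flag; otherwise your outline matches the paper's argument step for step.
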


\begin{proof} The proof will consist of two steps.
We start with approximating $f$ by a sum-product combination $f_1$ of local Taylor polynomials and one-dimensional piecewise-linear functions. After that, we will use results of the previous section to approximate $f_1$ by a neural network.  

Let $N$ be a positive integer. Consider a partition of unity  formed by a grid of $(N+1)^d$ functions $\phi_{\mathbf m}$ on the domain $[0,1]^d$:   
$$\sum_{\mathbf{m}} \phi_\mathbf{m}(\mathbf x) \equiv 1, \quad \mathbf x\in [0,1]^d.$$ 
Here $\mathbf m = (m_1,\ldots,m_d)\in \{0,1,\ldots,N\}^d,$ and the function $\phi_{\mathbf m}$ is defined as the product
\begin{equation}\label{eq:phi}
\phi_{\mathbf m}(\mathbf x)=\prod_{k=1}^d \psi\Big(3N\big(x_k-\frac{m_k}{N}\big)\Big),
\end{equation}
where
$$\psi(x)=\begin{cases}
1, & |x|< 1,\\
0, & 2 < |x|, \\
2-|x|, & 1\le |x|\le 2 
\end{cases}$$
(see Fig. \ref{fig:partition}). Note that 
\begin{equation}\label{eq:phinorm}
\|\psi\|_\infty=1\text{ and }\|\phi_{\mathbf m}\|_\infty=1\;\;\forall \mathbf m 
\end{equation} and
\begin{equation}\label{eq:phisupp}
\operatorname{supp}\phi_{\mathbf m}\subset \Big\{\mathbf x: \Big|x_k-\frac{m_k}{N}\Big|<\frac{1}{N}\;\; \forall k\Big\}. 
\end{equation}
\begin{figure}
\begin{center}
\includegraphics[width=0.3\textwidth, trim= 0mm 0mm 0mm 0mm,clip]{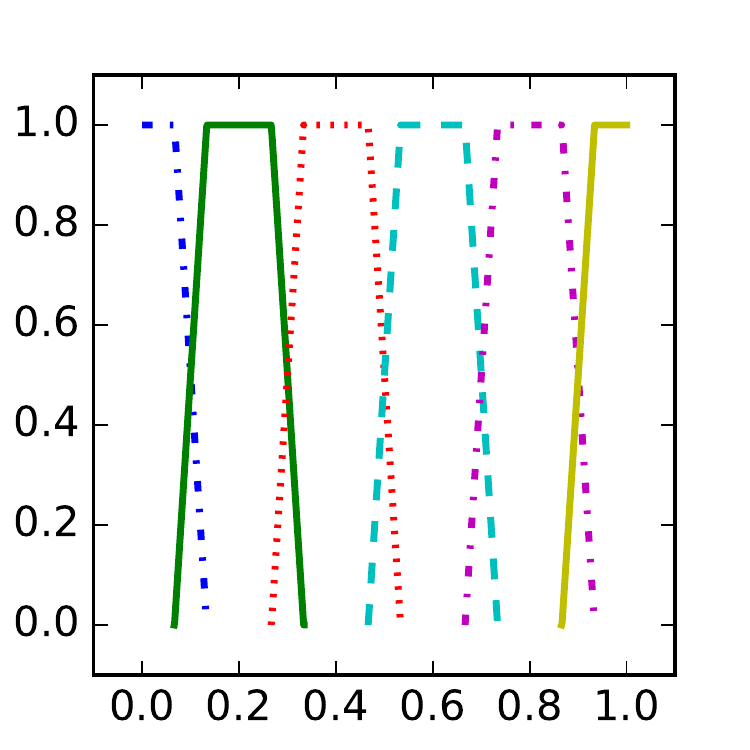}
\end{center}
\caption{Functions $(\phi_m)_{m=0}^5$ forming a partition of unity for $d=1, N=5$ in the proof of Theorem \ref{th:gensmooth}.}\label{fig:partition}
\end{figure}
For any $\mathbf m\in \{0,\ldots,N\}^d$, consider the degree-$(n-1)$ Taylor polynomial for the function $f$ at $\mathbf x=\frac{\mathbf m}{N}$:
\begin{equation}\label{p}
P_{\mathbf m}(\mathbf x)=\sum_{\mathbf n: |\mathbf n|<n}\frac{D^{\mathbf n}f}{{\mathbf n}!}\bigg|_{\mathbf x=\frac{\mathbf m}{N}} \Big(\mathbf x-\frac{\mathbf m}{N}\Big)^\mathbf{n},
\end{equation}
with the usual conventions $\mathbf n!=\prod_{k=1}^dn_k!$ and $(\mathbf x-\frac{\mathbf m}{N})^{\mathbf n}=\prod_{k=1}^d(x_k-\frac{m_k}{N})^{n_k}$.
Now define an approximation to $f$ by 
\begin{equation}\label{f1}
f_1=\sum_{\mathbf m\in\{0,\ldots,N\}^d}\phi_{\mathbf m}P_{\mathbf m}.
\end{equation}
We bound the approximation error using the Taylor expansion of $f$:
\begin{align*}
|f(\mathbf x)-f_1(\mathbf x)| &= \Big|\sum_{\mathbf m}\phi_{\mathbf m}(\mathbf x)(f(\mathbf x)-P_{\mathbf m}(\mathbf x))\Big| \\
&\le \sum_{\mathbf m: |x_k-\frac{m_k}{N}|<\frac{1}{N} \forall k}|f(\mathbf x)-P_{\mathbf m}(\mathbf x)| \\
&\le 2^d\max_{\mathbf m: |x_k-\frac{m_k}{N}|<\frac{1}{N} \forall k}|f(\mathbf x)-P_{\mathbf m}(\mathbf x)| \\
&\le \frac{2^d d^n}{n!}\Big(\frac{1}{N}\Big)^{n}\max_{\mathbf n: |\mathbf n|=n }\esssup_{\mathbf x\in[0,1]^d}|D^{\mathbf n} f(\mathbf x)| \\
&\le \frac{2^d d^n}{n!}\Big(\frac{1}{N}\Big)^{n}.
\end{align*}
Here in the second step we used the support property \eqref{eq:phisupp} and the bound \eqref{eq:phinorm}, in the third the observation that any $\mathbf x\in[0,1]^d$ belongs to the support of at most $2^d$ functions $\phi_\mathbf{m}$, in the fourth a standard bound for the Taylor remainder, and in the fifth the property $\|f\|_{\mathcal W^{n,\infty}([0,1]^d)}\le 1.$

It follows that if we choose 
\begin{equation}\label{N}
N=\Big\lceil\big(\frac{n!}{2^dd^n}\frac{\epsilon}{2}\big)^{-1/n}\Big\rceil\end{equation}
(where $\lceil\cdot\rceil$ is the ceiling function), then 
\begin{equation}\label{eq:ffqeps}
\|f-f_1\|_\infty \le \frac{\epsilon}{2}.\end{equation} 
Note that, by \eqref{p} the coefficients of the polynomials $P_\mathbf{m}$ are uniformly bounded for all $f\in F_{d,n}$: 
\begin{equation}\label{p2}
P_{\mathbf m}(\mathbf x)=\sum_{\mathbf n: |\mathbf n|<n}a_{\mathbf{m},\mathbf{n}} \Big(\mathbf x-\frac{\mathbf m}{N}\Big)^\mathbf{n}, \quad |a_{\mathbf{m},\mathbf{n}}|\le 1.
\end{equation}

We have therefore reduced our task to the following: construct a network architecture capable of approximating with uniform error $\frac{\epsilon}{2}$ any function of the form \eqref{f1}, assuming that $N$ is given by \eqref{N} and the polynomials $P_{\mathbf m}$ are of the form \eqref{p2}.

Expand $f_1$ as 
\begin{equation}\label{eq:f1def}f_1(\mathbf x)=\sum_{\mathbf m\in\{0,\ldots,N\}^d}\sum_{\mathbf n:|\mathbf n|<n}a_{\mathbf m, \mathbf n}\phi_{\mathbf m}(\mathbf x)\big(\mathbf x-\tfrac{\mathbf m}{N}\big)^\mathbf{n}.\end{equation}
The expansion is a linear combination of not more than $d^n(N+1)^d$ terms $\mathbf \phi_{\mathbf m}(\mathbf x)(\mathbf x-\frac{\mathbf m}{N})^\mathbf{n}$. Each of these terms is a product of at most $d+n-1$ piece-wise linear univariate factors: $d$ functions $\psi(3Nx_k-3m_k)$ (see \eqref{eq:phi}) and at most $n-1$ linear expressions $x_k-\frac{m_k}{N}$. We can implement an approximation of this product by a neural network with the help of Proposition \ref{th:fg}. Specifically, let $\widetilde\times$ be the approximate multiplication from Proposition \ref{th:fg} for $M=d+n$ and some accuracy $\delta$ to be chosen later, and consider the approximation of the product $\mathbf \phi_{\mathbf m}(\mathbf x)(\mathbf x-\frac{\mathbf m}{N})^\mathbf{n}$ obtained by the chained application of $\widetilde\times$:
\begin{equation}\label{eq:chain}
\widetilde f_{\mathbf m, \mathbf n}(\mathbf x)=
\widetilde\times\big(\psi(3Nx_1-3m_1), \widetilde\times\big(\psi(3Nx_2-3m_2),\ldots,\widetilde\times\big(x_k-\tfrac{m_k}{N},\ldots\big)\ldots\big)\big).
\end{equation}
that Using statement c) of Proposition \ref{th:fg}, we see $\widetilde f_{\mathbf m, \mathbf n}$ can be implemented by a ReLU network with the depth and the number of weights and computation units not larger than $(d+n)c_1 \ln(1/\delta),$ for some constant $c_1=c_1(d,n)$. 

Now we estimate the error of this approximation. Note that we have $|\psi(3Nx_k-3m_k)|\le 1$ and $|x_k-\frac{m_k}{N}|\le 1$ for all $k$ and all $\mathbf x\in[0,1]^d$. By statement a) of Proposition \ref{th:fg}, if $|a|\le 1$ and $|b|\le M$, then $|\widetilde\times(a,b)|\le |b|+\delta$. Repeatedly applying this observation to all approximate multiplications in \eqref{eq:chain} while assuming $\delta<1$, we see that the arguments of all these multiplications are bounded by our $M$ (equal to $d+n$) and the statement a) of Proposition \ref{th:fg} holds for each of them. We then have
\begin{equation}
\begin{aligned}\label{eq:fmn}
\big|\widetilde f_{\mathbf m, \mathbf n}(\mathbf x)-&\phi_{\mathbf m}(\mathbf x)\big(\mathbf x-\tfrac{\mathbf m}{N}\big)^\mathbf{n}\big| \\
=&\big|\widetilde\times\big(\psi(3Nx_1-3m_1), \widetilde\times\big(\psi(3Nx_2-3m_2),\widetilde\times\big(\psi(3Nx_3-3m_3),\ldots\big)\big)\big)\\
&-\psi(3Nx_1-3m_1)\psi(3Nx_2-3m_2)\psi(3Nx_3-3m_3)\ldots\big|\\
\le&\big|\widetilde\times\big(\psi(3Nx_1-3m_1), 
\widetilde\times\big(\psi(3Nx_2-3m_2),
\widetilde\times\big(\psi(3Nx_3-3m_3),\ldots\big)\big)\big)\\
&-\psi(3Nx_1-3m_1)\cdot\widetilde\times\big(\psi(3Nx_2-3m_2),\widetilde\times\big(\psi(3Nx_3-3m_3),\ldots\big)\big)\big|\\
&+|\psi(3Nx_1-3m_1)|\cdot\big|\widetilde\times\big(\psi(3Nx_2-3m_2),
\widetilde\times\big(\psi(3Nx_3-3m_3),\ldots\big)\big)\\
&-\psi(3Nx_2-3m_2)\cdot\widetilde\times\big(\psi(3Nx_3-3m_3),\ldots\big)\big|\\
&+\ldots\\
\le&(d+n)\delta.
\end{aligned}
\end{equation}
Moreover, by statement b) of Proposition \ref{th:fg},
\begin{equation}\label{eq:xi0}
\widetilde f_{\mathbf m, \mathbf n}(\mathbf x)=\phi_{\mathbf m}(\mathbf x)\big(\mathbf x-\tfrac{\mathbf m}{N}\big)^\mathbf{n}, \quad \mathbf x\notin \operatorname{supp}\phi_{\mathbf m}.
\end{equation}
Now we define the full approximation by
\begin{equation}\label{eq:wffmn}\widetilde f=\sum_{\mathbf m\in\{0,\ldots,N\}^d}\sum_{\mathbf n:|\mathbf n|<n}a_{\mathbf m, \mathbf n}\widetilde f_{\mathbf m, \mathbf n}.\end{equation}
We estimate the approximation error of $\widetilde f$:
\begin{align*}|\widetilde f(\mathbf x)-f_1(\mathbf x)| &= \bigg|\sum_{\mathbf m\in\{0,\ldots,N\}^d}\sum_{\mathbf n:|\mathbf n|<n}a_{\mathbf m, \mathbf n}\Big(\widetilde f_{\mathbf m, \mathbf n}(\mathbf x)-\phi_{\mathbf m}(\mathbf x)\big(\mathbf x-\tfrac{\mathbf m}{N}\big)^\mathbf{n}\Big)\bigg|\\
&= \bigg|\sum_{\mathbf m: \mathbf x\in \operatorname{supp}\phi_{\mathbf m}}\sum_{\mathbf n:|\mathbf n|<n}a_{\mathbf m, \mathbf n}\Big(\widetilde f_{\mathbf m, \mathbf n}(\mathbf x)-\phi_{\mathbf m}(\mathbf x)\big(\mathbf x-\tfrac{\mathbf m}{N}\big)^\mathbf{n}\Big)\bigg|\\
&\le 2^d \max_{\mathbf m: x\in \operatorname{supp}\phi_{\mathbf m}}\sum_{\mathbf n:|\mathbf n|<n}\Big|\widetilde f_{\mathbf m, \mathbf n}(\mathbf x)-\phi_{\mathbf m}(\mathbf x)\big(\mathbf x-\tfrac{\mathbf m}{N}\big)^\mathbf{n}\Big|\\
&\le 2^dd^n(d+n)\delta,
\end{align*}
where in the first step we use expansion \eqref{eq:f1def}, in the second the identity \eqref{eq:xi0}, in the third the bound $|a_{\mathbf m, \mathbf n}|\le 1$ and the fact that $\mathbf x\in\operatorname{supp}\phi_{\mathbf m}$ for at most $2^d$ functions $\phi_{\mathbf m},$ and in the fourth the bound \eqref{eq:fmn}. It follows that if we choose \begin{equation}\label{eq:deltaeps}
\delta=\frac{\epsilon}{2^{d+1}d^n(d+n)},\end{equation} then $\|\widetilde f-f_1\|_\infty\le\frac{\epsilon}{2}$ and hence, by \eqref{eq:ffqeps}, $$\|\widetilde f-f\|_\infty\le \|\widetilde f-f_1\|_\infty+\|f_1-f\|_\infty\le \frac{\epsilon}{2}+\frac{\epsilon}{2}\le\epsilon.$$

On the other hand, note that by \eqref{eq:wffmn}, $\widetilde f$ can be implemented by a network consisting of parallel subnetworks that compute each of $\widetilde f_{\mathbf m, \mathbf n}$; the final output is obtained by weighting the outputs of the subnetworks with the weights $a_{\mathbf m, \mathbf n}$. The architecture of the full network does not depend on $f$; only the weights $a_{\mathbf m, \mathbf n}$ do. As already shown, each of these subnetworks has not more than $c_1\ln(1/\delta)$ layers, weights and computation units,  with some constant $c_1=c_1(d,n)$. There are not more than $d^n(N+1)^d$ such subnetworks. Therefore, the full network for $\widetilde f$ has not more than $c_1\ln(1/\delta)+1$ layers and $d^n(N+1)^d(c_1\ln(1/\delta)+1)$ weights and computation units. With $\delta$ given by \eqref{eq:deltaeps} and $N$ given by \eqref{N}, we obtain the claimed complexity bounds.
\end{proof}

\subsection{Faster approximations using adaptive network architectures}
\label{sec:faster}
Theorem \ref{th:gensmooth} provides an upper bound for the approximation complexity in the case when the same network architecture is used to approximate all functions in $F_{d,n}$. We can consider an alternative, ``adaptive architecture'' scenario where not only the weights, but also the architecture is adjusted to the approximated function. We expect, of course, that this would decrease the complexity of the resulting architectures, in general (at the price of needing to find the appropriate architecture). In this section we show that we can indeed obtain better upper bounds in this scenario.

For simplicity, we will only consider the case $d=n=1$. Then, $\mathcal W^{n,\infty}([0,1]^d)$ is the space of Lipschitz functions on the segment $[0,1]$. The set $F_{1,1}$ consists of functions $f$ having both $\|f\|_\infty$ and the Lipschitz constant bounded by 1. Theorem \ref{th:gensmooth} provides  an upper bound $O(\frac{\ln(1/\epsilon)}{\epsilon})$ for the number of weights and computation units, but in this special case there is in fact a better bound $O(\frac{1}{\epsilon})$ obtained simply by piece-wise interpolation. 

Namely, given $f\in F_{1,1}$ and $\epsilon>0$, set $T=\lceil\frac{1}{\epsilon}\rceil$ and let $\widetilde f$ be the piece-wise interpolation of $f$ with $T+1$ uniformly spaced breakpoints $(\frac{t}{T})_{t=0}^T$ (i.e., $\widetilde f(\frac{t}{T})=f(\frac{t}{T}), t=0,\ldots,T$).  The function $\widetilde f$ is also Lipschitz with constant 1 and hence $\|f-\widetilde f\|_\infty\le\frac{1}{T}\le\epsilon$ (since for any $x\in [0,1]$ we can find $t$ such that $|x-\frac{t}{T}|\le\frac{1}{2T}$ and then $|f(x)-\widetilde f(x)|\le |f(x)-f(\frac{t}{T})|+|\widetilde f(\frac{t}{T})-\widetilde f(x)|\le 2\cdot\frac{1}{2T}=\frac{1}{T}$). At the same time, the function $\widetilde f$ can be expressed in terms of the ReLU function $\sigma$ by $$\widetilde f(x)=b+\sum_{t=0}^{T-1}w_t \sigma\Big(x-\frac{t}{T}\Big)$$ with some coefficients $b$ and $(w_t)_{t=0}^{T-1}$. This expression can be viewed as a special case of the depth-3 ReLU network with $O(\frac{1}{\epsilon})$ weights and computation units. 

We show now how the bound $O(\frac{1}{\epsilon})$ can be improved by using adaptive architectures. 

\begin{theor}\label{th:faster}
For any $f\in F_{1,1}$ and $\epsilon\in (0,\frac{1}{2})$, there exists a depth-6 ReLU network $\eta$ (with architecture depending on $f$) that provides an $\epsilon$-approximation of $f$ while having not more than $\frac{c}{\epsilon\ln(1/\epsilon)}$ weights, connections and computation units. Here $c$ is an absolute constant.  
\end{theor}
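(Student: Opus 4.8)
\emph{Strategy.} The plan is to reproduce, up to error $O(\epsilon)$, the piecewise-linear interpolant $\bar f$ of $f$ on the uniform grid of spacing $\epsilon$; this $\bar f$ is itself $\epsilon$-close to $f$, but realized as a ReLU network in the obvious way it would need $\Theta(1/\epsilon)$ units. To do better I would use an adaptive ``cache'' scheme: group the $\sim 1/\epsilon$ grid intervals into $B\sim\frac{1}{\epsilon\ln(1/\epsilon)}$ consecutive \emph{blocks} of $m\sim\ln(1/\epsilon)$ intervals each, store most of the description of $f$ inside $O(B)$ real-valued weights (exploiting that a single weight may carry arbitrarily many bits), and unpack them with a fixed small-depth sub-network. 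Since the resulting dependence of the architecture and weights on $f$ is highly discontinuous, this does not contradict the continuous-nonlinear-width lower bound of Theorem~\ref{th:dwidth}.

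\emph{Encoding $f$.} On each block I first re-anchor: record the values of $f$ at the two block endpoints. Their piecewise-linear interpolant $A$ has $B$ breakpoints, is realizable by a depth-$3$ subnetwork with $O(B)$ units, and approximates $f$ with error $O(\epsilon\ln(1/\epsilon))$. The residual $\rho=f-A$ restricted to a block is bounded by $O(\epsilon\ln(1/\epsilon))$, is Lipschitz, and vanishes at the block endpoints; I encode it by a delta-modulation ``path'': a string of $m$ symbols from a fixed finite alphabet of increments (all of size a bounded multiple of $\epsilon$), chosen greedily so that the running sum stays within $O(\epsilon)$ of $\rho$ at every grid node and returns to $0$ at the end of the block. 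The point is that the path increments are \emph{exact}, so the $O(\epsilon)$ tracking error does not accumulate along the block. The $m$ symbols of block $i$ are packed into a single number $\phi_i\in[0,1)$ as the digits of a base-$q$ expansion. Thus the entire data describing $f$ consists of $O(B)$ endpoint values together with the $B$ numbers $\phi_i$, i.e.\ $O(B)$ weights in total.

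\emph{Decoding, the size count, and the main obstacle.} Depth-$3$, $O(B)$-unit subnetworks compute: the block index $b(x)$ and the address $\phi_{b(x)}$ (a step function of $x$ with $B$ steps); the within-block coordinate $u(x)=m\cdot(\text{fractional position in block})\in[0,m]$, realized as a $B$-tooth sawtooth assembled as a sum of individual teeth (\emph{not} by composition, to keep the depth constant); and the coarse part $A(x)$. A single fixed \emph{decoder} gadget then evaluates $D(\phi,u)$, the value at ``time'' $u$ of the piecewise-linear path whose base-$q$ digits are those of $\phi$, and I set $\widetilde f(x)=A(x)+\epsilon\,D(\phi_{b(x)},u(x))$. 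The size is controlled by the identity $D(\phi,u)=\sum_{j=1}^m c(d_j(\phi))\,r_j(u)$, where $d_j(\phi)$ is the $j$-th base-$q$ digit, $c$ maps a digit to its increment, and $r_j$ is a fixed ramp: the digit $d_j(\phi)$ is a step function of $\phi$ with only $q^j$ pieces, each product $c(d_j(\phi))\,r_j(u)$ is an indicator-times-ramp, hence realizable with $O(q^j)$ units in two further layers, and summing over $j$ uses $O(q^m)$ units --- crucially $O(q^m)$ rather than $O(mq^m)$, because the coarse digits are reused across all the strips $u\in[\ell,\ell+1]$. Choosing $m$ so that $q^m$ is of order $B\sim\frac{1}{\epsilon\ln(1/\epsilon)}$ balances the two contributions and yields the claimed bound on the number of units and weights, while the depth is a fixed small number of layers --- one for the addressing, two for the decoder, plus the output --- which one checks does not exceed $6$. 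I expect the main obstacle to be the decoder: realizing $D$ exactly by a constant-depth ReLU circuit requires replacing the genuinely discontinuous digit-extraction steps by steep continuous ramps and noting that the admissible addresses $\phi_i$ all lie on a coarse grid, so the ramps still agree with the steps where it matters; and one must check that $\widetilde f$ is globally continuous across block boundaries even though $b(x)$ and $u(x)$ jump there --- this works because $D(\phi,0)=D(\phi,m)=0$ for every admissible $\phi$, but arranging the steep transitions so that they contribute only $O(\epsilon)$ to the error near the boundaries is the most delicate point of the construction. Finally one rescales $\epsilon$ by a constant factor to absorb the $O(\epsilon)$'s into $\epsilon$.
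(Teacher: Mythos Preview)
Your overall plan---coarse interpolation on $B\sim 1/(\epsilon\ln(1/\epsilon))$ blocks plus a cache of $\sim q^m$ local profiles with $m\sim\ln(1/\epsilon)$---is exactly the two-scale idea the paper uses, and your balancing of $B$ against $q^m$ reproduces the paper's balancing of $T$ against $m3^m$. The substantive difference is in how the block-to-profile assignment is stored. You pack the $m$-symbol string for block $i$ into a single real weight $\phi_i$, build a scalar address map $x\mapsto\phi_{b(x)}$, and attach a fixed digit-extracting decoder. The paper instead encodes the assignment directly in the \emph{connection pattern}: layer~2 carries one unit $\rho(Tx-t)$ per fine interval, and the layer-3 unit for profile $\gamma\in\Gamma$ simply sums those layer-2 outputs $t$ for which $\gamma_t=\gamma$; no scalar address is ever computed, and the routing is done by wiring.

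This difference is not cosmetic---it affects both the connection count and the depth. Your decoder needs $O(q^m)$ breakpoint units of the form $\sigma(a\phi+b)$. If $\phi$ is left as a linear combination of the $O(B)$ layer-2 ReLUs, each such unit carries $O(B)$ incoming connections, for a total of $O(Bq^m)\sim (\epsilon\ln(1/\epsilon))^{-2}$ connections, which destroys the bound. So $\phi$ (and $u$) must first be materialized as single units in a dedicated layer; that costs one layer relative to the paper's routing, which feeds layer~2 into the $|\Gamma|$ profile units with only $O(T)$ total connections. On top of this, the boundary issue you flag is real and is not handled by $D(\phi,0)=D(\phi,m)=0$ alone: during the steep drop of the sawtooth $u$ from $m$ to $0$, the decoder sweeps through the whole path and outputs values of size $O(m)$, producing pointwise errors $\sim m\epsilon\sim\epsilon\ln(1/\epsilon)$, not $O(\epsilon)$. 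The paper fixes the analogous problem with a comb filter $\Phi_\delta$ that zeroes the residual near each boundary (where it is $O(\delta)$ anyway), implemented as two extra $\sigma$-units and one extra layer; in the paper's architecture this takes the depth from $5$ to $6$, but in yours, with the extra materialization layer already spent, the same filter lands you at depth~$7$. Bringing it back to $6$ would require either folding the filter into the decoder's final layer (possible but delicate---the obvious merges are ``leaky'' in the $\Phi\in(0,1)$ zone) or reverting to something closer to the paper's connection-based routing.
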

\begin{proof}
We first explain the idea of the proof. We start with interpolating $f$ by a piece-wise linear function, but not on the length scale $\epsilon$ -- instead, we do it on a coarser length scale $m\epsilon$, with some $m=m(\epsilon)>1$. We then create a ``cache'' of auxiliary subnetworks that we use to fill in the details and go down to the scale $\epsilon$, in each of the $m\epsilon$-subintervals. This allows us to reduce the amount of computations for small $\epsilon$ because the complexity of the cache only depends on $m$. 
The assignment of cached subnetworks to the subintervals is encoded in the network architecture and depends on the function $f$. We optimize $m$ by balancing the complexity of the cache  with that of the initial coarse approximation. This leads to $m\sim \ln(1/\epsilon)$ and hence to the reduction of the total complexity of the network by a factor  $\sim\ln(1/\epsilon)$ compared to the simple piece-wise linear approximation on the scale $\epsilon$. This construction is inspired by a similar argument used to prove the $O(2^n/n)$ upper bound for the complexity of Boolean circuits implementing $n$-ary functions \cite{shannon1949synthesis}.

The proof becomes simpler if, in addition to the ReLU function $\sigma$, we are allowed to use the activation function
\begin{equation}\label{eq:rho}
\rho(x)=\begin{cases}x, & x\in[0,1),\\ 0, & x\notin[0,1)\end{cases}
\end{equation}
in our neural network. Since $\rho$ is discontinuous, we cannot just use Proposition \ref{th:piecewise} to replace $\rho$-units by $\sigma$-units.
We will first prove the analog of the claimed result for the model including $\rho$-units, and then we will show how to construct a purely ReLU nework.
\begin{lemma}\label{th:rho}
For any $f\in F_{1,1}$ and $\epsilon\in (0,\frac{1}{2})$, there exists a depth-5 network including $\sigma$-units and $\rho$-units, that provides an $\epsilon$-approximation of $f$ while having not more than $\frac{c}{\epsilon\ln(1/\epsilon)}$ weights, where $c$ is an absolute constant.  
\end{lemma}
\begin{proof}
Given $f\in F_{1,1}$, we will construct an approximation $\widetilde f$ to $f$ in the form
$$\widetilde f= \widetilde f_1+\widetilde f_2.$$
Here, $\widetilde f_1$ is the piece-wise linear interpolation of $f$ with the breakpoints $\{\frac{t}{T}\}_{t=0}^T$, for some positive integer $T$ to be chosen later. Since $f$ is Lipschitz with constant 1, $\widetilde f_1$ is also Lipschitz with constant 1. We will denote by $I_t$ the intervals between the breakpoints:
$$I_t=\Big[\frac{t}{T}, \frac{t+1}{T}\Big),\quad t=0,\ldots,T-1.$$
We will now construct $\widetilde f_2$ as an approximation to the difference \begin{equation}\label{eq:f2}f_2=f-\widetilde f_1.\end{equation} 
Note that $f_2$ vanishes at the endpoints of the intervals $I_t$:
\begin{equation}\label{eq:f2pr1} f_2\Big(\frac{t}{T}\Big)=0,\;\; t=0,\ldots,T,
\end{equation}
and $f_2$ is Lipschitz with constant 2: 
\begin{equation}\label{eq:f2pr2} |f_2(x_1)-f_2(x_2)|\le 2|x_1-x_2|,
\end{equation}
since $f$ and $\widetilde f_1$ are Lipschitz with constant 1.

To define $\widetilde f_2$, we first construct a set $\Gamma$ of cached functions. Let $m$ be a positive integer to be chosen later. Let $\Gamma$ be the set of piecewise linear functions $\gamma: [0,1]\to\mathbb R$ with the breakpoints $\{\frac{r}{m}\}_{r=0}^m$ and the properties 
$$\gamma(0)=\gamma(1)=0$$
and
$$\gamma\Big(\frac{r}{m}\Big)-\gamma\Big(\frac{r-1}{m}\Big)\in \Big\{-\frac{2}{m}, 0, \frac{2}{m}\Big\},\quad r =1, \ldots, m.$$
Note that the size $|\Gamma|$ of $\Gamma$ is not larger than $3^m$.

If  $g:[0,1]\to \mathbb R$ is any Lipschitz function with constant 2 and $g(0)=g(1)=0$, then $g$ can be approximated by some $\gamma\in \Gamma$ with error not larger than $\frac{2}{m}$: namely, take $\gamma(\frac{r}{m})=\frac{2}{m}\lfloor g(\frac{r}{m})/\frac{2}{m}\rfloor$.

Moreover, if $f_2$ is defined by \eqref{eq:f2}, then, using \eqref{eq:f2pr1}, \eqref{eq:f2pr2}, on each interval $I_t$ the function $f_2$ can be approximated with error not larger than $\frac{2}{Tm}$ by a properly rescaled function $\gamma\in\Gamma$. Namely, for each $t=0,\ldots,T-1$ we can define the function $g$ by $g(y)=Tf_2(\frac{t+y}{T})$. Then it is Lipschitz with constant 2 and $g(0)=g(1)=0$, so we can find $\gamma_t\in\Gamma$ such that
\begin{equation*}
\sup_{y\in [0,1)}\Big|Tf_2\Big(\frac{t+y}{T}\Big)-\gamma_t(y)\Big| \le \frac{2}{m}.
\end{equation*}
This can be equivalently written as 
\begin{equation*}
\sup_{x\in I_t}\Big|f_2(x)-\frac{1}{T}\gamma_t(Tx-t)\Big| \le \frac{2}{Tm}.
\end{equation*}
Note that the obtained assignment $t\mapsto \gamma_t$ is not injective, in general ($T$ will be much larger than $|\Gamma|$).

We can then define $\widetilde f_2$ on the whole $[0,1)$ by
\begin{equation}\label{eq:f2i}
\widetilde f_2(x)=\frac{1}{T}\gamma_t(Tx-t),\quad x\in I_t, \quad t=0,\ldots,T-1.
\end{equation}
This $\widetilde f_2$ approximates $f_2$ with error $\frac{2}{Tm}$ on $[0,1)$: 
\begin{equation}\label{eq:f22tm}
\sup_{x\in[0,1)}|f_2(x)-\widetilde f_2(x)|\le \frac{2}{Tm},
\end{equation}
and hence, by \eqref{eq:f2}, for the full approximation $\widetilde f=\widetilde f_1+\widetilde f_2$ we will also have
\begin{equation}\label{eq:f2tm}
\sup_{x\in[0,1)}|f(x)-\widetilde f(x)|\le \frac{2}{Tm}.
\end{equation}
Note that the approximation $\widetilde f_2$ has properties analogous to \eqref{eq:f2pr1}, \eqref{eq:f2pr2}: 
\begin{equation}\label{eq:wf2pr1} \widetilde f_2\Big(\frac{t}{T}\Big)=0,\quad t=0,\ldots,T,
\end{equation}
\begin{equation}\label{eq:wf2pr2} |\widetilde f_2(x_1)-\widetilde f_2(x_2)|\le 2|x_1-x_2|,
\end{equation}
in particular, $\widetilde f_2$ is continuous on $[0,1)$.

We will now rewrite $\widetilde f_2$ in a different form interpretable as a computation by a neural network. Specifically, using our additional activation function $\rho$ given by \eqref{eq:rho}, we can express $\widetilde f_2$  as
\begin{equation}\label{eq:wf2alt0}\widetilde f_2(x)
=\frac{1}{T}\sum_{\gamma\in \Gamma}\gamma\Big(\sum_{t: \gamma_t=\gamma} \rho(Tx-t)\Big).
\end{equation}
Indeed, given $x\in[0,1)$, observe that all the terms in the inner sum vanish except for the one corresponding to the $t$ determined by the condition $x\in I_t$. For this particular $t$ we have $\rho(Tx-t)=Tx-t$. Since $\gamma(0)=0$, we conclude that \eqref{eq:wf2alt0} agrees with \eqref{eq:f2i}.

Let us also expand $\gamma\in\Gamma$ over the basis of shifted ReLU functions:
$$\gamma(x) = \sum_{r=0}^{m-1}c_{\gamma,r}\sigma\Big(x-\frac{r}{m}\Big),\quad x\in[0,1].$$
Substituting this expansion in \eqref{eq:wf2alt0}, we finally obtain
\begin{equation}\label{eq:wf2alt}\widetilde f_2(x)
=\frac{1}{T}\sum_{\gamma\in \Gamma}\sum_{r=0}^{m-1}c_{\gamma,r}\sigma\Big(\sum_{t: \gamma_t=\gamma} \rho(Tx-t)-\frac{r}{m}\Big).
\end{equation} 

Now consider the implementation of $\widetilde f$ by a neural nework. The term $\widetilde f_1$ can clearly be implemented by a depth-3 ReLU network using $O(T)$ connections and computation units. The term $\widetilde f_2$ can be implemented by a depth-5 network with $\rho$- and $\sigma$-units as follows (we denote a computation unit by $Q$ with a superscript indexing the layer and a subscript indexing the unit within the layer).
\begin{enumerate}
\item The first layer contains the single input unit $Q^{(1)}$.
\item The second layer contains $T$ units $(Q^{(2)}_t)_{t=1}^{T}$ 
computing 
$Q^{(2)}_t=\rho(TQ^{(1)}-t).$
\item The third layer contains $|\Gamma|$ units $(Q^{(3)}_\gamma)_{\gamma\in\Gamma}$ computing $Q^{(3)}_\gamma=\sigma(\sum_{t: \gamma_t=\gamma} Q^{(2)}_t)$.  This is equivalent to $Q^{(3)}_\gamma=\sum_{t: \gamma_t=\gamma} Q^{(2)}_t$, because $Q^{(2)}_t\ge 0$. 
\item The fourth layer contains $m|\Gamma|$ units $(Q^{(4)}_{\gamma,r})_{\stackrel{\gamma\in\Gamma}{r=0,\ldots,m-1}}$ computing $Q^{(4)}_{\gamma,r}=\sigma(Q^{(3)}_{\gamma}-\frac{r}{m}).$
\item The final layer consists of a single output unit $Q^{(5)}=\sum_{\gamma\in\Gamma}\sum_{r=0}^{m-1}\frac{c_{\gamma,r}}{T}Q^{(4)}_{\gamma,r}.$
\end{enumerate}
Examining this network, we see that the total number of connections and units in it is $O(T+m|\Gamma|)$ and hence is $O(T+m3^m)$. This also holds for the full network implementing $\widetilde f=\widetilde f_1+\widetilde f_2$, since the term $\widetilde f_1$ requires even fewer layers, connections and units. The output units of the subnetworks for $\widetilde f_1$ and $\widetilde f_2$ can be merged into the output unit for $\widetilde f_1+\widetilde f_2$, so the depth of the full network is the maximum of the depths of the networks implementing $\widetilde f_1$ and $\widetilde f_2$, i.e., is 5 (see Fig. \ref{fig:cache}).

\begin{figure}
\begin{center}
\includegraphics[width=0.75\textwidth, trim={28mm 17mm 49mm 20mm},clip]{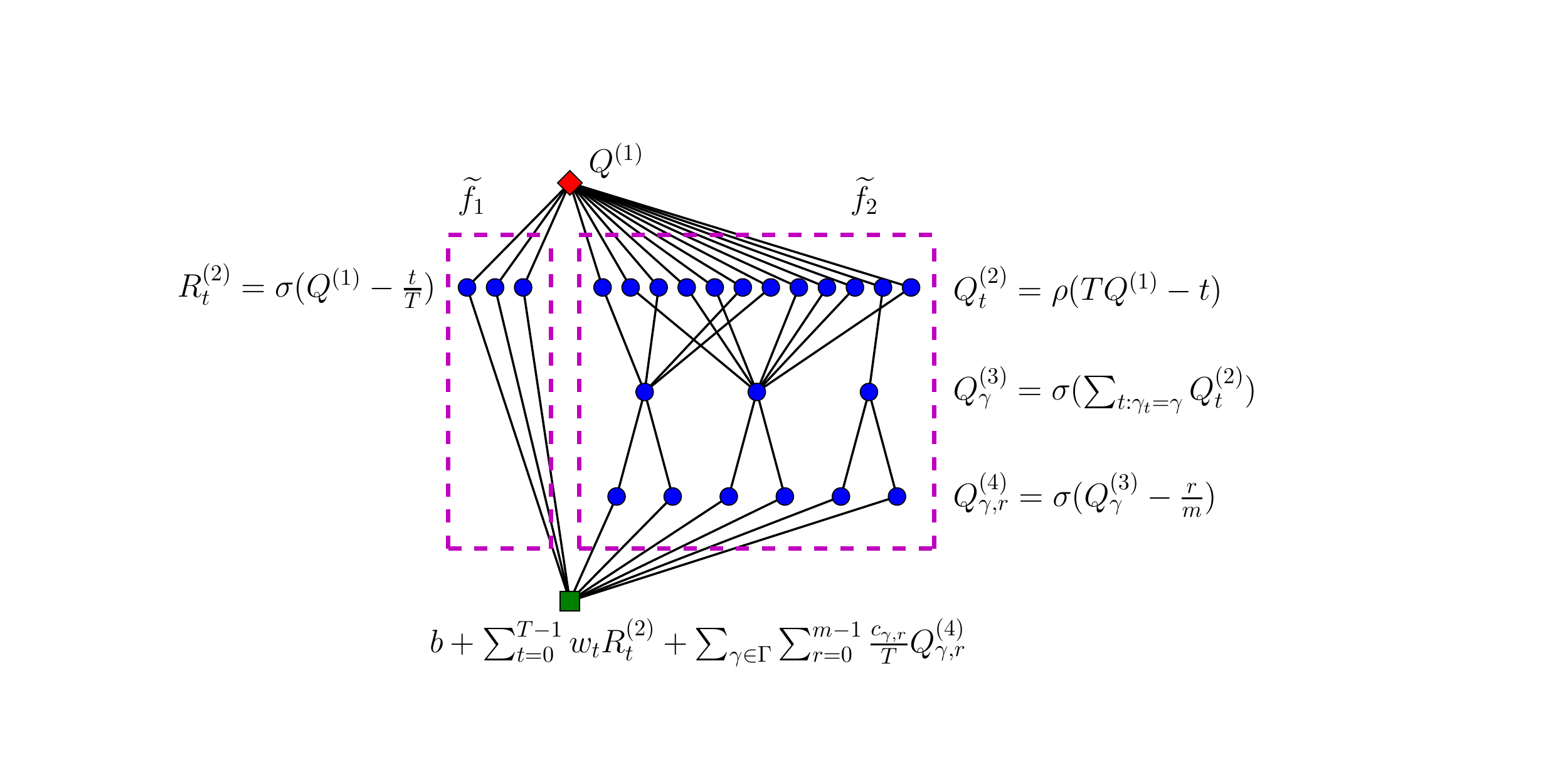}
\caption{Architecture of the network implementing the function $\widetilde f=\widetilde f_1+\widetilde f_2$ from Lemma~\ref{th:rho}. 
}
\label{fig:cache}
\end{center}
\end{figure}

Now, given $\epsilon\in(0,\frac{1}{2})$, take $m=\lceil\frac{1}{2}\log_3(1/\epsilon)\rceil$ and $T=\lceil\frac{2}{m\epsilon}\rceil.$ Then, by \eqref{eq:f2tm}, the approximation error $\max_{x\in[0,1]}|f(x)-\widetilde f(x)|\le\frac{2}{Tm}\le \epsilon$, while $T+m3^m =O(\frac{1}{\epsilon\ln(1/\epsilon)})$, which implies the claimed complexity bound.
\end{proof}

We show now how to modify the constructed network so as to remove $\rho$-units. We only need to modify the $\widetilde f_2$ part of the network. We will show that for any $\delta>0$ we can replace $\widetilde f_2$ with a function $\widetilde{f}_{3,\delta}$ (defined below) that 
\begin{itemize}
\item[a)] obeys the following analog of approximation bound \eqref{eq:f22tm}:
\begin{equation}\label{eq:f2f3}\sup_{x\in[0,1]}|f_2(x)-\widetilde{f}_{3,\delta}(x)|\le \frac{8\delta}{T}+\frac{2}{Tm},
\end{equation}
\item[b)] and is implementable by a depth-6 ReLU network having complexity $c(T+m3^m)$ with an absolute constant $c$ independent of $\delta$.
\end{itemize}
Since $\delta$ can be taken arbitrarily small, the Theorem then follows by arguing as in Lemma \ref{th:rho}, only with $\widetilde f_2$ replaced by $\widetilde{f}_{3,\delta}$.

As a first step, we approximate $\rho$ by a continuous piece-wise linear function $\rho_\delta$, with a small $\delta>0$:
\begin{equation*}
\rho(y)=\begin{cases}
y, & y\in[0,1-\delta),\\ 
\frac{1-\delta}{\delta}(1-y), & y\in[1-\delta,1),\\ 
0, & y\notin[0,1).\end{cases}
\end{equation*}
Let $\widetilde f_{2,\delta}$ be defined as $\widetilde f_2$ in \eqref{eq:wf2alt}, but with $\rho$ replaced by $\rho_\delta$:
\begin{equation*}
\widetilde f_{2,\delta}(x)
=\frac{1}{T}\sum_{\gamma\in \Gamma}\sum_{r=0}^{m-1}c_{\gamma,r}\sigma\Big(\sum_{t: \gamma_t=\gamma} \rho_\delta(Tx-t)-\frac{r}{m}\Big).
\end{equation*}
Since $\rho_\delta$ is a continuous piece-wise linear function with three breakpoints, we can express it via the ReLU function, and hence implement $\widetilde f_{2,\delta}$ by a purely ReLU network, as in Proposition \ref{th:piecewise}, and the complexity of the implementation does not depend on $\delta$. However, replacing $\rho$ with $\rho_\delta$ affects the function $\widetilde f_2$ on the intervals $(\frac{t-\delta}{T},\frac{t}{T}], t=1,\ldots,T$, introducing there a large error (of magnitude $O(\frac{1}{T})$). But recall that both $f_2$ and $\widetilde f_2$ vanish at the points $\frac{t}{T},t=0,\ldots,T,$ by \eqref{eq:f2pr1}, \eqref{eq:wf2pr1}. We can then largely remove this newly introduced error by simply suppressing $\widetilde f_{2,\delta}$ near the points  $\frac{t}{T}$.  

Precisely, consider the continuous piece-wise linear function
$$\phi_\delta(y)=
\begin{cases}
0, & y\notin[0,1-\delta),\\
\frac{y}{\delta}, & y\in[0, \delta),\\
1, & y\in[\delta,1-2\delta),\\
\frac{1-\delta-y}{\delta}, &y\in[1-2\delta,1-\delta)
\end{cases}$$
and the full comb-like filtering function
$$\Phi_\delta(x)=\sum_{t=0}^{T-1}\phi_\delta(Tx-t).$$
Note that $\Phi_\delta$ is continuous piece-wise linear with $4T$ breakpoints, and $0\le \Phi_\delta(x)\le 1$. 
We then define  our final modification of $\widetilde f_2$ as 
\begin{equation}\label{eq:f3delta}
\widetilde{f}_{3,\delta}(x)=\sigma\Big(\widetilde{f}_{2,\delta}(x)+2\Phi_\delta(x)-1\Big)-\sigma\Big(2\Phi_\delta(x)-1\Big).\end{equation}
\begin{lemma} The function $\widetilde{f}_{3,\delta}$ obeys the bound \eqref{eq:f2f3}.
\end{lemma} 
\begin{proof} Given $x\in[0,1)$, let $t\in\{0,\ldots,T-1\}$ and $y\in [0,1)$ be determined from the representation $x=\frac{t+y}{T}$   (i.e., $y$ is the relative position of $x$ in the respective interval $I_t$). Consider several possibilities for $y$: 
\begin{enumerate}
\item $y\in[1-\delta,1]$. In this case $\Phi_\delta(x)=0$. Note that \begin{equation}\label{eq:f2dl1}\sup_{x\in[0,1]}|\widetilde f_{2,\delta}(x)|\le 1,\end{equation} because, by construction, $\sup_{x\in[0,1]}|\widetilde f_{2,\delta}(x)|\le \sup_{x\in[0,1]}|\widetilde f_{2}(x)|$, and $\sup_{x\in[0,1]}|\widetilde f_{2}(x)|\le 1$ by \eqref{eq:wf2pr1}, \eqref{eq:wf2pr2}. It follows that both terms in  \eqref{eq:f3delta} vanish, i.e., $\widetilde{f}_{3,\delta}(x)=0$. But, since $f_2$ is Lipschitz with constant 2 by \eqref{eq:f2pr2} and $f_2(\frac{t+1}{T})=0$, we have $|f_2(x)|\le |f_2(x)-f_2(\frac{t+1}{T})|\le \frac{2|y-1|}{T}\le \frac{2\delta}{T}$. This implies $|f_2(x)-\widetilde{f}_{3,\delta}(x)|\le \frac{2\delta}{T}$.
\item $y\in[\delta,1-2\delta]$. In this case $\Phi_\delta(x)=1$ and $\widetilde{f}_{2,\delta}(x)=\widetilde{f}_{2}(x)$. Using \eqref{eq:f2dl1}, we find that $\widetilde{f}_{3,\delta}(x)=\widetilde{f}_{2,\delta}(x)=\widetilde{f}_{2}(x)$. It follows that $|f_2(x)-\widetilde{f}_{3,\delta}(x)|=|f_2(x)-\widetilde{f}_{2}(x)|\le\frac{2}{Tm}$.
\item $y\in[0,\delta]\cup[1-2\delta,1-\delta]$. In this case $\widetilde{f}_{2,\delta}(x)=\widetilde{f}_{2}(x)$. Since $\sigma$ is Lipschitz with constant 1, $|\widetilde{f}_{3,\delta}(x)|\le |\widetilde{f}_{2,\delta}(x)|=|\widetilde{f}_{2}(x)|$. Both $f_2$ and $\widetilde{f}_{2}$ are Lipschitz with constant 2 (by \eqref{eq:f2pr2},  \eqref{eq:wf2pr2}) and vanish at $\frac{t}{T}$ and $\frac{t+1}{T}$ (by \eqref{eq:f2pr1}, \eqref{eq:wf2pr1}). It follows that $$|f_2(x)-\widetilde{f}_{3,\delta}(x)|\le |f_2(x)|+|\widetilde{f}_{2}(x)|\le 2\begin{cases}2|x-\frac{t}{T}|, & y\in[0,\delta]\\ 2|x-\frac{t+1}{T}|, & y\in[1-2\delta,1-\delta]\end{cases}\le \frac{8\delta}{T}.$$  
\end{enumerate}
\end{proof}
It remains to verify the complexity property b) of the function  $\widetilde f_{3,\delta}$.
As already mentioned, $\widetilde f_{2,\delta}$ can be implemented by a depth-5 purely ReLU network with not more than $c(T+m3^m)$ weights, connections and computation units, where $c$ is an absolute constant independent of $\delta$. 
The function $\Phi_\delta$ can be implemented  by a shallow, depth-3 network with $O(T)$ units and connection. Then, computation of $\widetilde f_{3,\delta}$ can be implemented by a network including two subnetworks for computing $\widetilde f_{2,\delta}$ and $\Psi_\delta$, and an additional layer containing  two $\sigma$-units as written in \eqref{eq:f3delta}. We thus obtain 6 layers in the resulting full network and, choosing $T$ and $m$ in the same way as in Lemma \ref{th:rho}, obtain the bound $\frac{c}{\epsilon\ln(1/\epsilon)}$ for the number of its connections, weights, and computation units.  
\end{proof}

\section{Lower bounds}\label{sec:futility}

\subsection{Continuous nonlinear widths}\label{sec:dwidth}
The method of continuous nonlinear widths (\cite{devore1989optimal}) is a very general approach to the analysis of parameterized nonlinear approximations, based on the assumption of continuous selection of their parameters. We are interested in the following lower bound for the complexity of approximations in $\mathcal W^{n,\infty}([0,1]^d).$ 
\begin{theor}[\cite{devore1989optimal}, Theorem 4.2]\label{th:dwidth} Fix $d,n$. Let $W$ be a positive integer and $\eta:\mathbb R^W\to C([0,1]^d)$ be any mapping between the space $\mathbb R^W$ and the space $C([0,1]^d)$. Suppose that there is a continuous map  $\mathcal M:F_{d,n}\to \mathbb R^W$ such that $\|f-\eta(\mathcal M(f))\|_{\infty}\le \epsilon$ for all $f\in F_{d,n}.$ Then $W\ge c\epsilon^{-d/n},$ with some constant $c$ depending only on $n$.
\end{theor}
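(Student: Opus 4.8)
The plan is to prove the lower bound $W \ge c\epsilon^{-d/n}$ by the standard volume/covering-number argument that underlies continuous nonlinear widths, and since Theorem~\ref{th:dwidth} is quoted directly from \cite{devore1989optimal}, I would either cite it outright or reconstruct its proof along the following lines. The core obstruction to a small parameter count is topological: if $W$ were too small, the composition $\eta \circ \mathcal M$ would be forced, by a Borsuk--Ulam-type argument, to collapse some high-dimensional features of $F_{d,n}$, contradicting the $\epsilon$-approximation requirement.

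First I would isolate inside $F_{d,n}$ a large family of ``bump'' functions with disjoint supports. Fix a smooth (or piecewise-polynomial) function $\phi$ supported in $[0,1]^d$ with $\|\phi\|_{\mathcal W^{n,\infty}} \le 1$ and $\|\phi\|_\infty \ge c_0 > 0$; partition $[0,1]^d$ into $K = \lfloor (c_1/\epsilon)^{1/n} \rfloor^d$ congruent subcubes of side $h \sim K^{-1/d}$, and let $\phi_j$ be the rescaled copy $\phi_j(\mathbf x) = h^n \phi((\mathbf x - \mathbf a_j)/h)$ placed on the $j$-th subcube. The scaling $h^n$ is exactly what keeps $\|\phi_j\|_{\mathcal W^{n,\infty}} \le 1$ (every derivative of order $|\mathbf n|\le n$ picks up $h^{n-|\mathbf n|}\le 1$), while $\|\phi_j\|_\infty \sim h^n \sim \epsilon$ up to constants — so we can arrange $\|\phi_j\|_\infty \ge 3\epsilon$ by adjusting $c_1$. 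Then for any sign vector $\boldsymbol\beta \in \{-1,+1\}^K$, the function $f_{\boldsymbol\beta} = \sum_j \beta_j \phi_j$ lies in $F_{d,n}$ (disjoint supports, so the norm is the max over $j$).

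Next I would run the topological argument. Suppose for contradiction that $W < K$. Consider the map $\boldsymbol\beta \mapsto \mathcal M(f_{\boldsymbol\beta})$, but more usefully extend it to the sphere: define $G: S^{K-1} \to \mathbb R^W$ by $G(\mathbf u) = \mathcal M\big(\sum_j u_j \phi_j\big)$, which is continuous because $\mathbf u \mapsto \sum_j u_j\phi_j$ is continuous into $F_{d,n}$ (on the sphere $\|\sum u_j \phi_j\|_{\mathcal W^{n,\infty}} = \max_j |u_j| \le 1$) and $\mathcal M$ is continuous. Since $W \le K-1$, by the Borsuk--Ulam theorem there is an antipodal pair $\mathbf u, -\mathbf u$ with $G(\mathbf u) = G(-\mathbf u)$, i.e.\ $\mathcal M(g) = \mathcal M(-g)$ where $g = \sum_j u_j \phi_j$. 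Then $\eta(\mathcal M(g)) = \eta(\mathcal M(-g)) =: \widetilde g$ must $\epsilon$-approximate both $g$ and $-g$, forcing $\|g\|_\infty \le \|g - \widetilde g\|_\infty + \|\widetilde g - (-g)\|_\infty \le 2\epsilon$ (using $\|{-g} - \widetilde g\|_\infty \le \epsilon$ and linearity). But picking a coordinate $j_0$ with $|u_{j_0}| = \|\mathbf u\|_\infty$, and $\mathbf x_0$ the point where $|\phi_{j_0}|$ attains its max, disjointness of supports gives $\|g\|_\infty \ge |u_{j_0}|\,\|\phi_{j_0}\|_\infty \ge \tfrac{1}{\sqrt K}\cdot 3\epsilon$ — wait, that is too weak; instead I would work directly with the discrete cube: for $\boldsymbol\beta\in\{\pm1\}^K$, $\|f_{\boldsymbol\beta}\|_\infty \ge 3\epsilon$ always, and apply Borsuk--Ulam to the normalized map on $S^{K-1}$ but extract the contradiction at a vertex by noting $G$ restricted to the vertices already must identify an antipodal vertex pair when $W<K$ (this is the LSB/Tucker combinatorial form). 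At an antipodal vertex pair $\boldsymbol\beta, -\boldsymbol\beta$ we get $\|f_{\boldsymbol\beta}\|_\infty \le 2\epsilon < 3\epsilon$, a contradiction. Hence $W \ge K \ge c\,\epsilon^{-d/n}$.

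The main obstacle is getting the topological step cleanly right: Borsuk--Ulam applies to continuous maps $S^{K-1}\to\mathbb R^{K-1}$ and yields an antipodal coincidence, but I need that coincidence to land (or to be pushable) to the $\pm1$ vertices where the sup-norm lower bound is uniform, rather than at an interior point where $\|g\|_\infty$ could be as small as $O(\epsilon/\sqrt K)$ and no contradiction arises. The fix is the standard one used in \cite{devore1989optimal}: use a Lipschitz ``radial rounding'' retraction or directly invoke the formulation that any continuous odd map $S^{K-1}\to\mathbb R^{K-1}$ vanishes somewhere, applied to $\mathbf u \mapsto G(\mathbf u)-G(-\mathbf u)$ composed with a generic linear surjection, then transfer the zero to a vertex via the combinatorial (Tucker's lemma) version; alternatively one restricts attention to a simplex face and uses the Lyusternik--Schnirelmann covering form. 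Since the statement is cited verbatim from \cite{devore1989optimal}, in the paper I would simply refer to that source for the proof; the sketch above records the argument for completeness.
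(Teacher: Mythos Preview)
The paper does not prove this theorem at all; it is stated as a direct citation of \cite{devore1989optimal}, Theorem~4.2, and then applied. Your bottom-line recommendation --- simply refer to that source --- is exactly what the paper does.

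That said, the reconstruction sketch you offer for completeness has a real gap, and it is precisely the one you flag but then fail to close. You are right that the Borsuk--Ulam antipodal point $\mathbf u$ on the Euclidean sphere $S^{K-1}$ only guarantees $\max_j|u_j|\ge K^{-1/2}$, whence $\|g\|_\infty\gtrsim \epsilon/\sqrt K$, which is far too small to contradict $\|g\|_\infty\le\epsilon$. But the remedies you propose do not work: restricting $G$ to the discrete set of vertices $\{\pm1\}^K$ gives no antipodal coincidence (Borsuk--Ulam needs continuity on a sphere, not a finite set), and neither Tucker's lemma nor Lyusternik--Schnirelmann will force the coincidence to a vertex.

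The standard fix, which is what \cite{devore1989optimal} actually does, is much simpler: apply Borsuk--Ulam on the $\ell^\infty$-sphere $\partial\big([-1,1]^K\big)$, which is homeomorphic to $S^{K-1}$. Every point $\mathbf u$ on this boundary satisfies $\max_j|u_j|=1$, so by disjointness of supports $\|f_{\mathbf u}\|_\infty=\|\phi_{j_0}\|_\infty>\epsilon$ at the coordinate with $|u_{j_0}|=1$. The antipodal coincidence $\mathcal M(f_{\mathbf u})=\mathcal M(-f_{\mathbf u})$ then forces $\|2f_{\mathbf u}\|_\infty\le 2\epsilon$, a clean contradiction, and $W\ge K\ge c\epsilon^{-d/n}$ follows.
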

We apply this theorem by taking $\eta$ to be some ReLU network architecture, and $\mathbb R^W$ the corresponding weight space. It follows that if a ReLU network architecture is capable of expressing any function from $F_{d,n}$ with error $\epsilon$, then, under the hypothesis of continuous weight selection,  the network must have at least $c\epsilon^{-d/n}$ weights. The number of connections is then lower bounded by $\frac{c}{2}\epsilon^{-d/n}$ (since the number of weights is not larger than the sum of the number of computation units and the number of connections, and there are at least as many  connections as units).

The hypothesis of continuous weight selection is crucial in Theorem \ref{th:dwidth}. By examining our proof of the counterpart upper bound $O(\epsilon^{-d/n}\ln(1/\epsilon))$ in Theorem \ref{th:gensmooth}, the weights are selected there in a continuous manner, so this upper bound asymptotically lies above $c\epsilon^{-d/n}$ in agreement with Theorem \ref{th:dwidth}.  We remark, however, that the optimal choice of the network weights (minimizing the error) is known to be discontinuous in general, even for shallow networks (\cite{kainen1999approximation}).

We also compare the bounds of Theorems \ref{th:dwidth} and \ref{th:faster}. In the case $d=n=1$, Theorem \ref{th:dwidth} provides a lower bound $\frac{c}{\epsilon}$ for the number of weights and connections.  On the other hand, in the adaptive architecture scenario, Theorem \ref{th:faster} provides  the upper bound $\frac{c}{\epsilon\ln(1/\epsilon)}$ for the number of weights, connections and computation units. The fact that this latter bound is asymptotically below the bound of Theorem \ref{th:dwidth} reflects the extra expressiveness associated with variable network architecture.   

\subsection{Bounds based on VC-dimension}\label{sec:vc}
In this section we consider the setup where the same network architecture is used to approximate all functions $f\in F_{d,n}$, but the dependence of the weights on $f$ is not assumed to be necessarily continuous. In this setup, some lower bounds on the network complexity can be obtained as a consequence of existing upper bounds on VC-dimension of networks with piece-wise polynomial activation functions and Boolean outputs (\cite{anthony2009neural}). In the next theorem, part a) is a more general but weaker bound, while part b) is a stronger bound assuming a constrained growth of the network depth.

\begin{theor}\label{th:vc}
Fix $d, n$. 
\begin{itemize}
\item[a)] For any $\epsilon\in(0,1)$, a ReLU network architecture capable of approximating any function $f\in F_{d,n}$ with error $\epsilon$ must have at least $c\epsilon^{-d/(2n)}$ weights, with some constant $c=c(d,n)>0$.
\item[b)] Let $p\ge 0, c_1>0$ be some constants. For any $\epsilon\in(0,\frac{1}{2})$, if a ReLU network architecture of depth $L\le c_1\ln^p(1/\epsilon)$ is capable of approximating any function $f\in F_{d,n}$ with error $\epsilon$, then the network must have at least $c_2\epsilon^{-d/n}\ln^{-2p-1} (1/\epsilon)$ weights, with some constant $c_2=c_2(d,n,p,c_1)>0$.\footnote{The author thanks Matus Telgarsky for suggesting this part of the theorem.} 
\end{itemize} 
\end{theor}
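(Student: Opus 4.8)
The plan is to derive both parts from lower bounds on the Vapnik--Chervonenkis dimension, using the two standard VC upper bounds for networks with piece-wise polynomial activations: a network with $W$ weights has $\operatorname{VCdim}=O(W^{2})$ (Goldberg--Jerrum; see \cite{anthony2009neural}), and, keeping track of the depth $L$, a ReLU network with $W$ weights has $\operatorname{VCdim}=O(WL^{2}\log W)$ (\cite{bartlett1998almost,anthony2009neural}). It therefore suffices to exhibit, for any such $\epsilon$-capable architecture $\eta$, a subset of $[0,1]^{d}$ of cardinality of order $\epsilon^{-d/n}$ that is shattered by $\eta$ equipped with one extra threshold unit; the two VC inequalities then force $W$ to be large.

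To build the shattered set I fix once and for all a $C^{\infty}$ bump $\phi\colon\mathbb R^{d}\to[0,1]$ with $\phi(0)=1$, supported in the ball of radius $\tfrac12$, and put $K:=\|\phi\|_{\mathcal W^{n,\infty}}<\infty$. Given an integer $N\ge1$, let $\mathbf x_{1},\dots,\mathbf x_{M}$ be the interior vertices of the grid of spacing $1/N$ in $[0,1]^{d}$, so $M=(N-1)^{d}$ is of order $N^{d}$, and set $\phi_{i}(\mathbf x)=N^{-n}\phi\!\big(3N(\mathbf x-\mathbf x_{i})\big)$; then the $\phi_{i}$ have pairwise disjoint supports contained in $[0,1]^{d}$. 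Each differentiation of $\phi_{i}$ brings down a factor $3N$ against the prefactor $N^{-n}$, so (using $N\ge1$) the order-$n$ derivatives dominate and $\|\phi_{i}\|_{\mathcal W^{n,\infty}}\le 3^{n}K$ uniformly in $i,N$. Hence for any labelling $\mathbf y\in\{0,1\}^{M}$ the function $f_{\mathbf y}:=(3^{n}K)^{-1}\sum_{i:\,y_{i}=1}\phi_{i}$ lies in $F_{d,n}$ (disjoint supports), while $f_{\mathbf y}(\mathbf x_{i})$ equals $h:=(3^{n}K)^{-1}N^{-n}$ if $y_{i}=1$ and $0$ if $y_{i}=0$. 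Finally I take $N$ of order $\epsilon^{-1/n}$, namely the largest integer with $h\ge 3\epsilon$, so that $M$ is of order $\epsilon^{-d/n}$ (this is for small $\epsilon$; for $\epsilon$ bounded away from $0$ both claimed bounds hold trivially after adjusting constants).

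Suppose now $\eta$ has $W$ weights, depth $L$, and $\epsilon$-approximates every $f\in F_{d,n}$. For each $\mathbf y$ there are weights $\mathbf w_{\mathbf y}$ with $\|f_{\mathbf y}-\eta(\mathbf w_{\mathbf y})\|_{\infty}\le\epsilon$, whence $\eta(\mathbf w_{\mathbf y})(\mathbf x_{i})\ge h-\epsilon>\tfrac h2$ when $y_{i}=1$ and $\le\epsilon<\tfrac h2$ when $y_{i}=0$. Feeding the (linear) output of $\eta$ into a single threshold unit with bias $-\tfrac h2$ produces a Boolean network $\eta'$ with $W+O(1)$ weights and depth $L+1$ that realizes every labelling of $\{\mathbf x_{1},\dots,\mathbf x_{M}\}$, i.e.\ shatters this set, so $\operatorname{VCdim}(\eta')\ge M$. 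Since $M$ is of order $\epsilon^{-d/n}$, the bound $\operatorname{VCdim}(\eta')=O(W^{2})$ gives $W\ge c\,\epsilon^{-d/(2n)}$, which is part~a). For part~b) we may assume $W\le\epsilon^{-d/n}$ (otherwise there is nothing to prove), so $\log W=O(\ln(1/\epsilon))$; then $\operatorname{VCdim}(\eta')=O\big(W(L+1)^{2}\log W\big)$ combined with $L\le c_{1}\ln^{p}(1/\epsilon)$ yields $\epsilon^{-d/n}\le C\,W\,\ln^{2p}(1/\epsilon)\,\ln(1/\epsilon)$, i.e.\ $W\ge c_{2}\,\epsilon^{-d/n}\ln^{-2p-1}(1/\epsilon)$.

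I expect the main obstacle to be invoking the right VC-dimension estimate for deep ReLU networks in part~b): one needs a bound of the form $O(W\cdot\mathrm{poly}(L,\log W))$ that is only quadratic in $L$ — this is exactly what converts a depth budget $L=O(\ln^{p}(1/\epsilon))$ into the loss factor $\ln^{2p+1}(1/\epsilon)$ — and one must also check that adding a threshold output unit keeps $\eta'$ within the scope of the cited results. The bump construction and the $\epsilon$-to-shattering reduction are routine; the one delicate point there is the scaling $\|\phi_{i}\|_{\mathcal W^{n,\infty}}\asymp 1$, which pins the height at order $N^{-n}$ and thus produces the exponent $d/n$ (and, via the square root from the $W^{2}$ bound, the exponent $d/(2n)$ in part~a).
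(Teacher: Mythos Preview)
Your proposal is correct and follows essentially the same approach as the paper: construct a grid of $\sim\epsilon^{-d/n}$ points, use disjointly supported bumps scaled by $N^{-n}$ to realize arbitrary Boolean labelings by functions in $F_{d,n}$, threshold the $\epsilon$-approximating network to shatter this set, and then invoke the two VC bounds $\operatorname{VCdim}=O(W^{2})$ and $\operatorname{VCdim}=O(WL^{2}\log W)$. The only notable difference is cosmetic: in part~b) you dispose of the implicit inequality $W\log W\gtrsim\epsilon^{-d/n}\ln^{-2p}(1/\epsilon)$ by first observing that one may assume $W\le\epsilon^{-d/n}$ (so $\log W=O(\ln(1/\epsilon))$), whereas the paper substitutes the trial form $W=c_{2}\epsilon^{-d/n}\ln^{-2p-1}(1/\epsilon)$ and checks it directly; both arguments are equivalent.
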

\begin{proof} Recall that given a class $H$ of Boolean functions on $[0,1]^d$, the VC-dimension of $H$ is defined as the  size of the largest shattered subset $S\subset [0,1]^d$, i.e. the largest subset on which $H$ can compute any dichotomy (see, e.g., \cite{anthony2009neural}, Section 3.3). We are interested in the case when $H$ is the family of functions obtained by applying thresholds $\mathbbm{1}(x>a)$ to a ReLU network with fixed architecture but variable weights. In this case Theorem 8.7 of \cite{anthony2009neural} implies that 
\begin{equation}\label{eq:vc1}
\operatorname{VCdim}(H)\le c_3W^2,\end{equation}
and Theorem 8.8 implies that 
\begin{equation}\label{eq:vc2}\operatorname{VCdim}(H)\le c_3L^2W\ln W,
\end{equation} 
where $W$ is the number of weights, $L$ is the network depth, and $c_3$ is an absolute constant.

Given a positive integer $N$ to be chosen later, choose $S$ as a set of $N^d$ points $\mathbf x_1,\ldots, \mathbf x_{N^d}$ in the cube $[0,1]^d$ such that the distance between any two of them is not less than $\frac{1}{N}$. Given any assignment of values $y_1,\ldots,y_{N^d}\in \mathbb R$, we can construct a smooth function $f$ satisfying $f(\mathbf x_m)=y_m$ for all $m$ by setting
\begin{equation}\label{fy}
f(\mathbf x) = \sum_{m=1}^{N^d}y_m\phi(N(\mathbf x-\mathbf x_m)),
\end{equation}
with some $C^\infty$ function $\phi:\mathbb R^d\to\mathbb R$ such that $\phi(\mathbf 0)=1$ and $\phi(\mathbf x)=0$ if $|\mathbf x|>\frac{1}{2}$. 

Let us obtain a condition ensuring that such $f\in F_{d,n}$. For any multi-index $\mathbf{n}$, $$\max_{\mathbf x}|D^{\mathbf{n}}f(\mathbf x)|= N^{|\mathbf n|}\max_{m}|y_m|\max_{\mathbf x} |D^{\mathbf n} \phi(\mathbf x)|,$$
so if 
\begin{equation}\label{maxy}
\max_m|y_m|\le c_4 N^{-n},
\end{equation}
with the constant $c_4=(\max_{\mathbf n:|\mathbf n|\le n}\max_{\mathbf x} |D^{\mathbf n} \phi(\mathbf x)|)^{-1}$, then $f\in F_{d,n}$.

Now set 
\begin{equation}\label{eps}
\epsilon=\frac{c_4}{3}N^{-n}.
\end{equation} Suppose that there is a ReLU network architecture $\eta$ that can approximate, by adjusting its weights, any $f\in F_{d,n}$ with error less than $\epsilon$. Denote by $\eta(\mathbf x, \mathbf w)$ the output of the network for the input vector $\mathbf x$ and the vector of weights $\mathbf w$. 

Consider any assignment $\mathbf z$ of Boolean values $z_1, \ldots, z_{N^d}\in\{0,1\}$. Set
$$y_m=z_m c_4 N^{-n},\quad m=1,\ldots,N^d,$$
and let $f$ be given by \eqref{fy} (see Fig. \ref{fig:th2}); then \eqref{maxy} holds and hence $f\in F_{d,n}$. 
\begin{figure}
\begin{center}
        \includegraphics[width=0.5\textwidth, trim={145mm 65mm 165mm 115mm},clip]{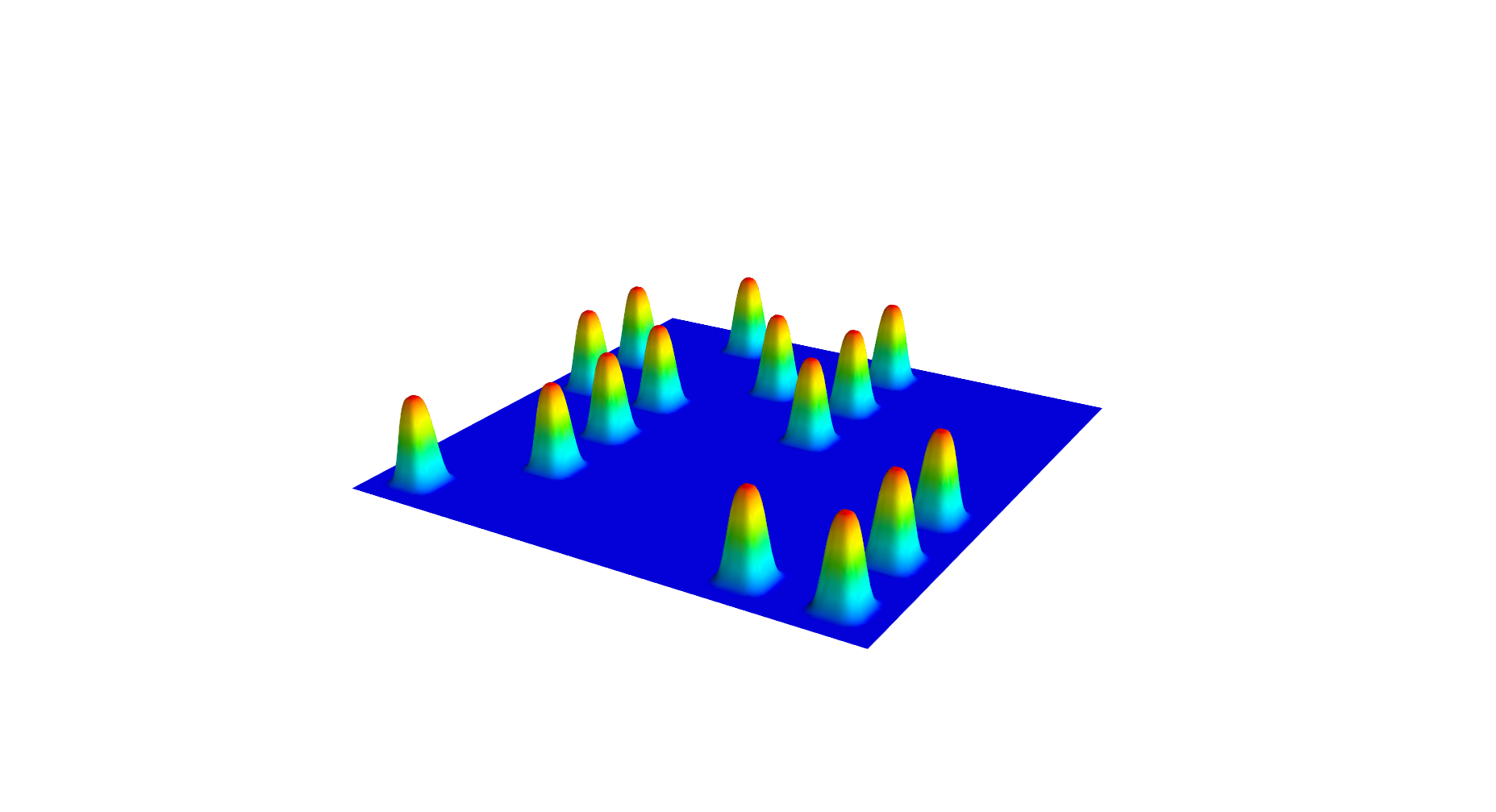}
\caption{A function $f$ considered in the proof of Theorem 2 (for $d=2$).}
\label{fig:th2}
\end{center}
\end{figure}
By assumption, there is then a vector of weights, $\mathbf w=\mathbf w_{\mathbf z}$, such that for all $m$ we have $|\eta(\mathbf x_m, \mathbf w_{\mathbf z})-y_m|\le\epsilon,$ and in particular
$$\eta(\mathbf x_m, \mathbf w_{\mathbf z}) 
\left\{\arraycolsep=1.5pt
\begin{array}{cccl}
\ge & c_4N^{-n}-\epsilon & > c_4N^{-n}/2, & \quad\text{ if } z_m = 1,\\
\le & \epsilon & < c_4N^{-n}/2, & \quad\text{ if } z_m = 0,
\end{array}
\right.
$$ so the thresholded network $\eta_1=\mathbbm{1}(\eta > c_4N^{-n}/2)$ has outputs $$\eta_1(\mathbf x_m, \mathbf w_{\mathbf z})=z_m, \quad m=1,\ldots, N^d.$$
Since the Boolean values $z_m$ were arbitrary, we conclude that the subset $S$ is shattered and hence 
\begin{equation*}
\operatorname{VCdim}(\eta_1)\ge N^d.
\end{equation*}
Expressing $N$ through $\epsilon$ with \eqref{eps}, we obtain
\begin{equation}\label{vcdim1}
\operatorname{VCdim}(\eta_1)\ge \Big(\frac{3\epsilon}{c_4}\Big)^{-d/n}.
\end{equation}
To establish part a) of the Theorem, we apply bound \eqref{eq:vc1} to the network $\eta_1$:
\begin{equation}\label{vcdim2}
\operatorname{VCdim}(\eta_1)\le c_3W^2,
\end{equation}
where $W$ is the number of weights in $\eta_1$, which is the same as in $\eta$ if we do not count the threshold parameter.  Combining \eqref{vcdim1} with \eqref{vcdim2}, we obtain the desired lower bound $W\ge c\epsilon^{-d/(2n)}$ with $c=(c_4/3)^{d/(2n)}c_3^{-1/2}$.

To establish part b) of the Theorem, we use bound \eqref{eq:vc2} and the hypothesis $L\le c_1\ln^p(1/\epsilon)$:
\begin{equation}\label{vcdim3}
\operatorname{VCdim}(\eta_1) \le c_3c_1^2\ln^{2p}(1/\epsilon) W\ln W.
\end{equation}
Combining \eqref{vcdim1} with \eqref{vcdim3}, we obtain
\begin{equation}\label{vcdim4}
W\ln W\ge \frac{1}{c_3c_1^2}\Big(\frac{3\epsilon}{c_4}\Big)^{-d/n}\ln^{-2p}(1/\epsilon).
\end{equation}
Trying a $W$ of the form $W_{c_2}=c_2\epsilon^{-d/n}\ln^{-2p-1} (1/\epsilon)$ with a constant $c_2$, we get 
\begin{align*}
W_{c_2}\ln W_{c_2} &=c_2\epsilon^{-d/n}\ln^{-2p-1} (1/\epsilon)\Big(\frac{d}{n}\ln(1/\epsilon)+\ln c_2-(2p+1)\ln\ln(1/\epsilon)\Big)\\
&=\Big(c_2\frac{d}{n}+o(1)\Big)\epsilon^{-d/n}\ln^{-2p}(1/\epsilon).\end{align*}
Comparing this with \eqref{vcdim4}, we see that if we choose $c_2<(c_4/3)^{d/n}n/(dc_3c_1^2)$, then for sufficiently small $\epsilon$ we have $W\ln W\ge W_{c_2}\ln W_{c_2}$ and hence $W\ge W_{c_2}$, as claimed. We can ensure that $W\ge W_{c_2}$ for all $\epsilon\in(0,\frac{1}{2})$ by further decreasing $c_2$.
\end{proof}

We remark that the constrained depth hypothesis of part b) is satisfied, with $p=1$, by the architecture used for the upper bound in Theorem \ref{th:gensmooth}. The bound stated in part b) of Theorem \ref{th:vc} matches the upper bound of Theorem \ref{th:gensmooth} and the lower bound of Theorem \ref{th:dwidth} up to a power of $\ln(1/\epsilon)$.

\subsection{Adaptive network architectures}\label{sec:lb_funcdep}

Our goal in this section is to obtain a lower bound for the approximation complexity in the scenario where the network architecture may depend on the approximated function. This lower bound is thus a counterpart to the upper bound of Section \ref{sec:faster}. 

To state this result we define the complexity $\mathcal N(f,\epsilon)$ of approximating the function $f$ with error $\epsilon$ as the minimal number of hidden computation units in a ReLU network that provides such an approximation. 

\begin{theor}\label{th:lb_funcdep}
For any $d,n$, there exists $f\in \mathcal W^{n,\infty}([0,1]^d)$ such that $\mathcal N(f,\epsilon)$ is not $o(\epsilon^{-d/(9n)})$ as $\epsilon\to 0$.
\end{theor}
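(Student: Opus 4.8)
The plan is to construct a single function $f$ that, at a sequence of scales $\epsilon_j \to 0$, is ``maximally hard'' for ReLU networks to approximate, in the sense that approximating it with error $\epsilon_j$ forces a network of size at least $\sim \epsilon_j^{-d/(9n)}$. The starting point is a counting argument: a ReLU network with $W$ weights and fixed architecture, as the weights vary over $\mathbb{R}^W$, generates a family of functions whose restriction to any finite set of points has bounded ``capacity'' — more precisely, combining the VC-dimension bounds quoted in the proof of Theorem~\ref{th:vc} (or directly the bounds of \cite{anthony2009neural}) with the fact that a network with $U$ computation units has $W = O(U^2)$ weights, one sees that the number of computation units $U$ needed to $\epsilon$-approximate \emph{all} functions in $F_{d,n}$ is at least $\sim \epsilon^{-d/(2n)}$ up to logs (this is essentially Theorem~\ref{th:vc}a). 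The difficulty is that Theorem~\ref{th:vc} concerns a \emph{fixed} architecture approximating the whole ball $F_{d,n}$, whereas here the architecture may depend on $f$ and we want a statement about a \emph{single} $f$.

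The standard device to bridge this gap is a diagonalization/Baire-category or explicit-construction argument. First I would fix a rapidly decreasing sequence $\epsilon_j$ and, for each $j$, a large finite collection of ``bump'' functions supported on disjoint tiny cubes of side $\sim \epsilon_j^{1/n}$ inside a region $R_j \subset [0,1]^d$ (the regions $R_j$ chosen disjoint, e.g. shrinking dyadic shells), scaled so each lies in a ball of radius $\sim \epsilon_j$ in $\mathcal{W}^{n,\infty}$; there are $\sim \epsilon_j^{-d/n}$ such cubes. The key combinatorial fact — proved via the VC/counting bound — is that any network with fewer than $c\,\epsilon_j^{-d/(9n)}$ units can $\epsilon_j/3$-approximate at most an exponentially small fraction (in the number of cubes, hence strictly fewer than all) of the $2^{(\#\text{cubes})}$ sign patterns on these cubes; in fact, because the number of \emph{architectures} with $U$ units is itself only $2^{O(U\log U)}$ and each architecture shatters at most $O(U^2\log U)$ points, the total number of sign-patterns realizable by \emph{any} network with at most $U$ units is $2^{O(U^2 \log U)}$, which is $\ll 2^{\epsilon_j^{-d/n}}$ when $U \le c\,\epsilon_j^{-d/(9n)}$. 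Hence there exists a choice of signs on the cubes of $R_j$ — giving a function $f_j$ supported in $R_j$ — that \emph{no} ReLU network with $\le c\,\epsilon_j^{-d/(9n)}$ units approximates to within $\epsilon_j/3$.

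I would then set $f = \sum_j f_j$; since the $f_j$ have disjoint supports and $\|f_j\|_{\mathcal{W}^{n,\infty}} \to 0$ fast enough (taking $\epsilon_j$ decreasing geometrically, or faster), $f$ lies in $\mathcal{W}^{n,\infty}([0,1]^d)$. An $\epsilon_j$-approximation of $f$ restricts to an $\epsilon_j$-approximation of $f_j$ on $R_j$ (up to a harmless additive term from the tails $\sum_{i\neq j} f_i$, which can be absorbed by choosing the $\epsilon_i$ summably small and tightening constants, or by subtracting a fixed low-complexity approximation of the tail), and therefore must use at least $c\,\epsilon_j^{-d/(9n)}$ units. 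Thus $\mathcal{N}(f,\epsilon_j) \ge c\,\epsilon_j^{-d/(9n)}$ along the sequence $\epsilon_j \to 0$, so $\mathcal{N}(f,\epsilon)$ is not $o(\epsilon^{-d/(9n)})$.

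The main obstacle is getting the exponent right: one must carefully track the tension between (i) the number of bump-cubes, $\sim \epsilon^{-d/n}$, which controls how many sign patterns we need to defeat; (ii) the counting bound $2^{O(U^2\log U)}$ for the number of patterns realizable with $U$ units (the quadratic $W = O(U^2)$ conversion is what degrades the naive exponent $d/n$), and (iii) the union bound over architectures. Balancing $U^2 \log U \ll \epsilon^{-d/n}$ forces roughly $U \lesssim \epsilon^{-d/(2n)}$, and the gap to the claimed $\epsilon^{-d/(9n)}$ is the slack the author builds in to make the logarithmic factors and the disjoint-support bookkeeping go through cleanly; I would expect the bulk of the technical work to be in (a) the precise VC-based counting lemma — bounding the number of distinct sign vectors on $N^d$ points realizable by \emph{some} network with $\le U$ units — and (b) verifying that the tails $\sum_{i \neq j} f_i$ genuinely do not help a small network approximate $f_j$, i.e. that the hardness of $f_j$ is not spoiled by superposition.
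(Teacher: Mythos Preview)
Your approach is workable but takes a genuinely different route from the paper. The paper argues by \emph{contradiction}: assuming $\mathcal N(g,\epsilon)=o(\epsilon^{-d/(9n)})$ for every $g\in\mathcal W^{n,\infty}$, it builds $f=\sum_k a_k f_k$ with $a_k\to0$ (no disjoint supports), and controls the ``earlier tail'' $\sum_{s<k}a_sf_s$ not geometrically but by applying the contradiction hypothesis to it and using the subadditivity $\mathcal N(f_1\pm f_2,\epsilon_1+\epsilon_2)\le\mathcal N(f_1,\epsilon_1)+\mathcal N(f_2,\epsilon_2)$ to peel it off. The ``hard function at scale $\epsilon$'' step (Lemma~\ref{th:embed}) is handled not by a union over architectures but by embedding every network with $\le m$ hidden units into a single \emph{enveloping} architecture with $O(m^4)$ weights and then quoting Theorem~\ref{th:vc}a), giving exponent $d/(8n)$; the slack to $d/(9n)$ absorbs the shrinking coefficients $a_k$. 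Your disjoint-support construction trades this contradiction machinery for geometry: on $R_j$ one has $f=f_j$ exactly, so the obstacle you label (b) is in fact trivial and no ``subtraction of a low-complexity tail'' is needed. The price is bookkeeping --- you must check that enough bump-cubes fit in a shrinking $R_j$ (only $\gg U^{O(1)}$ cubes are needed, not the full $\epsilon_j^{-d/n}$, so this holds once $\epsilon_j$ is chosen small relative to $\operatorname{vol}(R_j)$). Two small inaccuracies to fix: your sign-pattern count $2^{O(U^2\log U)}$ is optimistic --- the quoted bound $\mathrm{VCdim}\le cW^2$ with $W=O(U^2)$ gives $O(U^4)$, and neither the union over architectures nor the enveloping network improves this --- though any polynomial in $U$ suffices for the target $d/(9n)$; and $\|f_j\|_{\mathcal W^{n,\infty}}$ is $\sim1$, not $\to0$, but with disjoint supports the max rather than the sum governs $\|f\|_{\mathcal W^{n,\infty}}$, so this is harmless.
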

The proof relies on the following lemma. 
\begin{lemma}\label{th:embed}
Fix $d,n$. For any sufficiently small $\epsilon>0$ there exists $f_\epsilon\in F_{d,n}$ such that $\mathcal N(f_\epsilon,\epsilon)\ge c_1\epsilon^{-d/(8n)}$, with some constant $c_1=c_1(d,n)>0$.
\end{lemma}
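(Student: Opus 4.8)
The plan is to prove Lemma~\ref{th:embed} by a counting argument (metric entropy of $F_{d,n}$ versus ``description length'' of small networks), recycling the ``spike function'' construction from the proof of Theorem~\ref{th:vc} but using it in the opposite direction: instead of shattering many points with one fixed architecture, I will show that the \emph{union}, over all architectures with few units, of the function families they generate is too small to $\epsilon$-approximate every element of $F_{d,n}$ unless the number of units is large. Assume for contradiction that \emph{every} $f\in F_{d,n}$ admits an $\epsilon$-approximation by a ReLU network with at most $U$ hidden units, where $U=\lceil c_1\epsilon^{-d/(8n)}\rceil$, and derive a contradiction for small $\epsilon$.

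First I would build the rich family. Fix an integer $K$, place $K^d$ points $\mathbf x_1,\dots,\mathbf x_{K^d}$ on a grid of spacing $1/K$ inside $[0,1]^d$ (shifted so the bumps below stay in the cube), and fix a $C^\infty$ bump $\phi$ supported in a ball of radius $\tfrac12$ with $\phi(\mathbf 0)=1$. For a bit string $\mathbf b\in\{0,1\}^{K^d}$ set $f_{\mathbf b}(\mathbf x)=\delta\sum_m b_m\phi(K(\mathbf x-\mathbf x_m))$. The summands have disjoint supports, so for $|\mathbf n|\le n$ one gets $\|D^{\mathbf n}f_{\mathbf b}\|_\infty\le C_\phi\,\delta K^n$ with $C_\phi=\max_{|\mathbf n|\le n}\|D^{\mathbf n}\phi\|_\infty$; choosing $\delta=3\epsilon$ and $K=\lfloor(3C_\phi\epsilon)^{-1/n}\rfloor$ forces $f_{\mathbf b}\in F_{d,n}$, and $K^d=\Theta(\epsilon^{-d/n})$. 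Since $f_{\mathbf b}(\mathbf x_m)=\delta b_m$, any $\widetilde f$ with $\|\widetilde f-f_{\mathbf b}\|_\infty\le\epsilon$ recovers $\mathbf b$ by thresholding $\widetilde f$ at level $\delta/2$ on the grid points (with a margin $\epsilon$). So the $2^{K^d}$ strings remain pairwise distinguishable ``through'' any $\epsilon$-approximation.

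Next I would bound the description length. There are at most $\exp(O(U^2))$ distinct architectures with $\le U$ hidden units (a computation graph on $\le U+d+1$ nodes), and each has $W=O(U^2)$ weights, with constants depending only on $d$. For a fixed architecture, Theorem~8.7 of \cite{anthony2009neural} — the same VC bound used in Theorem~\ref{th:vc}, valid uniformly over all weight assignments, in particular independently of weight magnitudes — gives $\operatorname{VCdim}\le c_3W^2=O(U^4)$ for the thresholded network; by the Sauer–Shelah lemma it realizes at most $(K^d)^{O(U^4)}=\exp(O(U^4\ln K))$ distinct threshold patterns on our $K^d$ points. Summing over architectures, the union of all $\le U$-unit networks produces at most $\exp(O(U^2))\cdot\exp(O(U^4\ln K))=\exp(O(U^4\ln K))$ such patterns, whereas the previous step requires at least $2^{K^d}$. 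Taking logarithms, $K^d\ln2\le O(U^4\ln K)$, and since $K^d=\Theta(\epsilon^{-d/n})$ and $\ln K=\Theta(\ln(1/\epsilon))$ this forces $U\ge c\,\epsilon^{-d/(4n)}(\ln(1/\epsilon))^{-1/4}$ — which, for small $\epsilon$, exceeds $c_1\epsilon^{-d/(8n)}$ for every fixed $c_1$, contradicting the choice $U=\lceil c_1\epsilon^{-d/(8n)}\rceil$. Hence some $f_\epsilon\in F_{d,n}$ has $\mathcal N(f_\epsilon,\epsilon)\ge c_1\epsilon^{-d/(8n)}$. (The slack between the exponents $8n$ and $4n$, and the discarded logarithmic factor, simply absorb all the hidden constants.)

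The main obstacle is the counting step: because the architecture, not merely the weights, may depend on $f$, one cannot apply a single VC/pseudo-dimension bound as in Theorem~\ref{th:vc}; one must combine a finite union bound over architectures ($\exp(O(U^2))$ of them) with the per-architecture VC bound, and must be sure the latter is genuinely weight-magnitude-independent — which it is for piecewise-linear activations, and which is exactly why the argument works without any boundedness assumption on the network weights. A secondary, routine technical point is the bookkeeping in the construction of the $f_{\mathbf b}$: checking that the rescaled bumps lie inside $[0,1]^d$, have pairwise disjoint supports, and that the Sobolev norm scales like $\delta K^n$, so that the membership $f_{\mathbf b}\in F_{d,n}$ and the threshold-recovery of $\mathbf b$ both go through with the stated choices of $\delta$ and $K$.
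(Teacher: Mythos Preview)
Your argument is correct, but the paper's proof is much shorter and takes a different route. Instead of counting architectures and invoking Sauer--Shelah, the paper observes that \emph{every} ReLU network with at most $m$ hidden units can be embedded (by zeroing out unused weights) into a single fixed ``enveloping'' architecture: $m$ hidden layers of $m$ units each, with all inter-layer connections present. This enveloping network has $O(m^4)$ weights, so Theorem~\ref{th:vc}(a) applies directly to it as a \emph{fixed} architecture, yielding immediately that if $m^4=o(\epsilon^{-d/(2n)})$ then some $f_\epsilon\in F_{d,n}$ escapes $\epsilon$-approximation --- hence $\mathcal N(f_\epsilon,\epsilon)\ge c_1\epsilon^{-d/(8n)}$. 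Your approach is more self-contained (you re-prove what is needed rather than citing Theorem~\ref{th:vc}) and in fact gives the stronger conclusion $U\gtrsim\epsilon^{-d/(4n)}(\ln(1/\epsilon))^{-1/4}$; the paper's enveloping trick is slicker but loses a factor because the enveloping network has $m^2$ rather than $m$ hidden units. Both arguments ultimately rest on the same VC bound $\operatorname{VCdim}\le c_3W^2$; the paper just avoids the union over architectures by absorbing them all into one.
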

\begin{proof} Observe that all the networks with not more than $m$ hidden computation units can be embedded in the single ``enveloping'' network that has $m$ hidden layers, each consisting of $m$ units, and that includes all the connections between units not in the same layer (see Fig. \ref{fig:envelope}). The number of weights in this enveloping network is $O(m^4)$. On the other hand, Theorem \ref{th:vc}a) states that at least $c\epsilon^{-d/(2n)}$ weights are needed for an architecture capable of $\epsilon$-approximating any function in $F_{d,n}$. It follows that there is a function $f_\epsilon\in F_{d,n}$ that cannot be $\epsilon$-approximated by networks with fewer than $c_1\epsilon^{-d/(8n)}$ computation units. 
\end{proof}

\begin{figure}
\begin{center}
    \begin{subfigure}[b]{0.45\textwidth}
        \includegraphics[width=\textwidth, trim={25mm 20mm 44mm 20mm},clip]{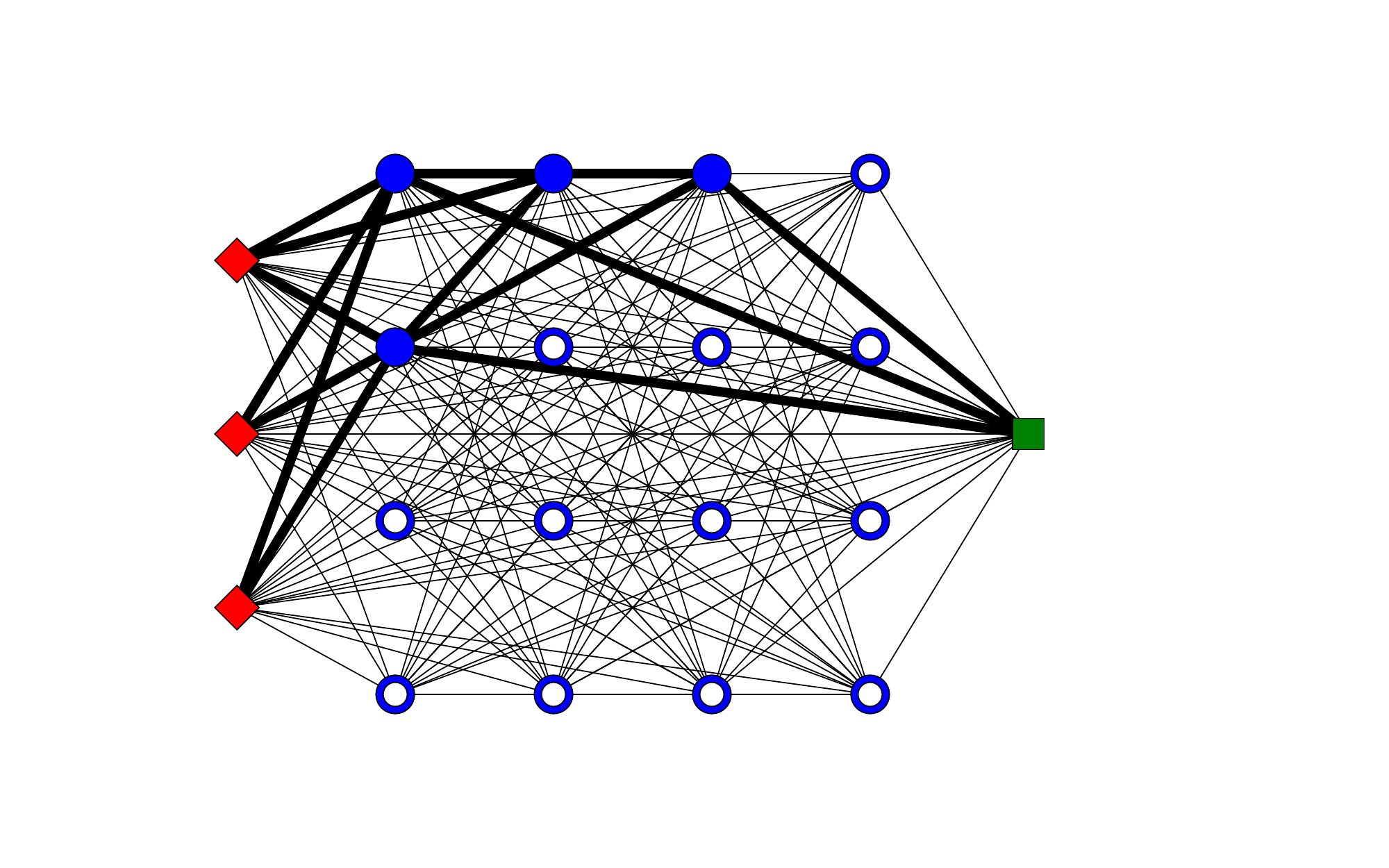}
        \caption{}
        \label{fig:envelope}
    \end{subfigure}    
    \begin{subfigure}[b]{0.35\textwidth}
        \includegraphics[width=\textwidth, trim={20mm 5mm 20mm 5mm},clip]{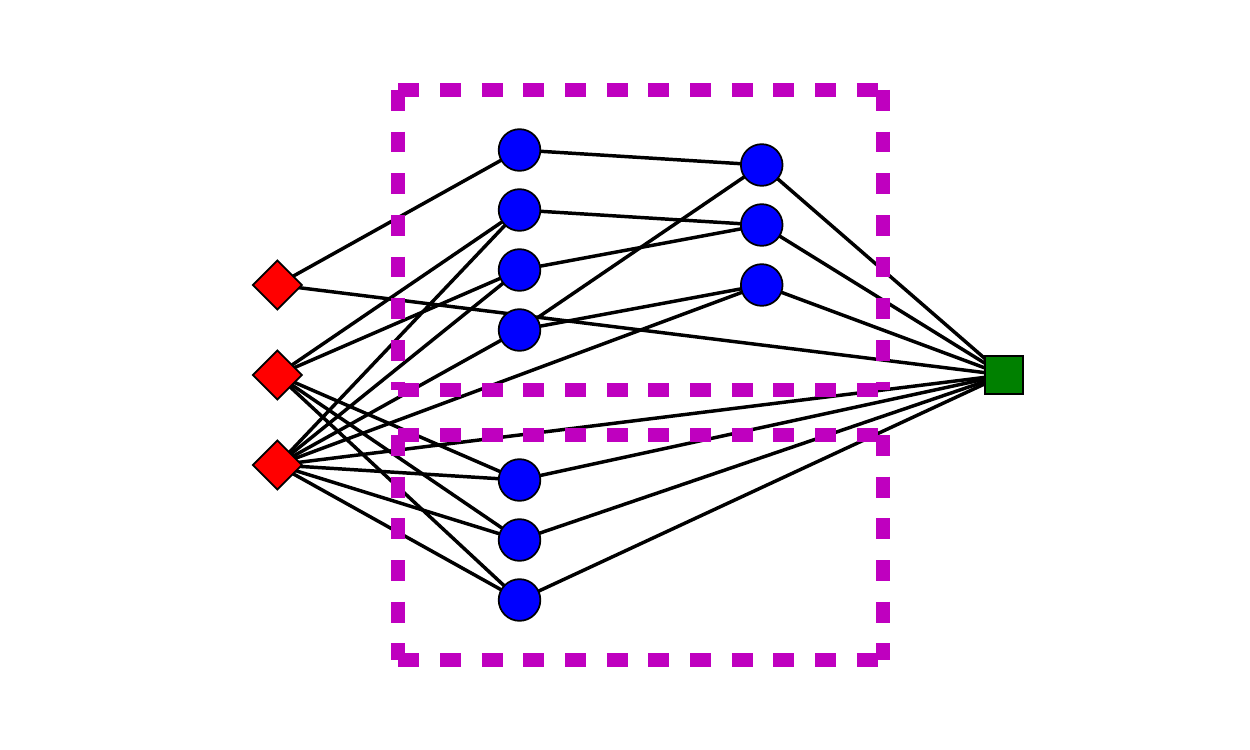}
        \caption{}
        \label{fig:paired_net}
    \end{subfigure}
\caption{(a) Embedding a network with $m=4$ hidden units into an ``enveloping'' network (see Lemma \ref{th:embed}). (b) Putting sub-networks in parallel to form an approximation for the sum or difference of two functions, see Eq. \eqref{Nf1f2}.}
\label{fig:lemma1}
\end{center}
\end{figure}

Before proceeding to the proof of Theorem \ref{th:lb_funcdep}, note that $\mathcal N(f,\epsilon)$ is a monotone decreasing function of $\epsilon$ with a few obvious properties:  
\begin{equation}\label{Naf}
\mathcal N(af, |a|\epsilon) = \mathcal N(f, \epsilon), \text{ for any } a\in\mathbb R\setminus\{0\}
\end{equation}
(follows by multiplying the weights of the output unit of the approximating network by a constant);
\begin{equation}\label{Nfg}
\mathcal N(f\pm g, \epsilon+\|g\|_\infty) \le \mathcal N(f, \epsilon)
\end{equation}
(follows by approximating $f\pm g$ by an approximation of $f$);
\begin{equation}\label{Nf1f2}
\mathcal N(f_1\pm f_2,\epsilon_1+\epsilon_2)\le \mathcal N(f_1,\epsilon_1)+\mathcal N(f_2, \epsilon_2)
\end{equation}
(follows by combining approximating networks for $f_1$ and $f_2$ as in Fig. \ref{fig:paired_net}). 

\begin{proof}[Proof of Theorem \ref{th:lb_funcdep}.] The claim of Theorem \ref{th:lb_funcdep} is similar to the claim of Lemma \ref{th:embed}, but is about a single function $f$ satisfying a slightly weaker complexity bound at multiple values of $\epsilon\to 0$. We will assume that Theorem \ref{th:lb_funcdep} is false, i.e., \begin{equation}\label{assum}
\mathcal N(f,\epsilon)=o(\epsilon^{-d/(9n)})
\end{equation} for all $f\in \mathcal W^{n,\infty}([0,1]^d),$ and we will reach contradiction by presenting $f$ violating this assumption. Specifically, we construct this $f$ as  
\begin{equation}\label{fser}
f=\sum_{k=1}^\infty a_k f_k,
\end{equation}
with some $a_k\in \mathbb R$, $f_k\in F_{d,n}$, and we will make sure that 
\begin{equation}\label{nfek}
\mathcal N(f, \epsilon_k)\ge \epsilon_k^{-d/(9n)}
\end{equation}
for a sequence of $\epsilon_k\to 0$. 

We determine $a_k, f_k, \epsilon_k$ sequentially. Suppose we have already found $\{a_s,f_s,\epsilon_s\}_{s=1}^{k-1}$; let us describe how we define $a_k, f_k, \epsilon_k$.

First, we set 
\begin{equation}\label{ak}
a_k=\min_{s=1,\ldots,k-1}\frac{\epsilon_s}{2^{k-s}}.
\end{equation}
In particular, this ensures that 
\begin{equation*}
a_k\le \epsilon_1 2^{1-k},\end{equation*}
so that the function $f$ defined by the series \eqref{fser} will be in $\mathcal W^{n,\infty}([0,1]^d)$, because $\|f_k\|_{\mathcal W^{n,\infty}([0,1]^d)}\le 1$. 

Next, using Lemma \ref{th:embed} and Eq. \eqref{Naf}, observe that if $\epsilon_k$ is sufficiently small, then we can find $f_k\in F_{d,n}$ such that
\begin{equation}\label{ekfk}
\mathcal N\big(a_kf_k, 3\epsilon_k\big)=\mathcal N\Big(f_k, \frac{3\epsilon_k}{a_k}\Big)\ge c_1\Big(\frac{3\epsilon_k}{a_k}\Big)^{-d/(8n)}\ge 2 \epsilon_k^{-d/(9n)}.
\end{equation}
In addition, by assumption \eqref{assum}, if $\epsilon_k$ is small enough then
\begin{equation}\label{ek}
\mathcal N\Big(\sum_{s=1}^{k-1}a_sf_s, \epsilon_k\Big) \le \epsilon_k^{-d/(9n)}.
\end{equation}
Let us choose $\epsilon_k$ and $f_k$ so that both \eqref{ekfk} and \eqref{ek} hold. Obviously, we can also make sure that $\epsilon_k\to 0$ as $k\to\infty$.

Let us check that the above choice of $\{a_k, f_k,\epsilon_k\}_{k=1}^\infty$ ensures that inequality \eqref{nfek} holds for all $k$: 
\begin{align*}
\mathcal N\Big(\sum_{s=1}^{\infty}a_sf_s, \epsilon_k\Big)
&\ge \mathcal N\Big(\sum_{s=1}^{k}a_sf_s, \epsilon_k+\Big\|\sum_{s=k+1}^\infty a_sf_s\Big\|_\infty\Big)\\
&\ge \mathcal N\Big(\sum_{s=1}^{k}a_sf_s, \epsilon_k+\sum_{s=k+1}^\infty a_s\Big)\\
&\ge \mathcal N\Big(\sum_{s=1}^{k}a_sf_s, 2\epsilon_k\Big)\\
&\ge \mathcal N(a_kf_k, 3\epsilon_k)-\mathcal N\Big(\sum_{s=1}^{k-1}a_sf_s, \epsilon_k\Big)\\
&\ge \epsilon^{-d/(9n)}.
\end{align*}
Here in the first step we use inequality \eqref{Nfg}, in the second the monotonicity of $\mathcal N(f,\epsilon)$, in the third the monotonicity of $\mathcal N(f,\epsilon)$ and the setting \eqref{ak}, in the fourth the inequality \eqref{Nf1f2}, and in the fifth the conditions \eqref{ekfk} and \eqref{ek}.
\end{proof}

\subsection{Slow approximation of smooth functions by shallow networks}\label{sec:slow}
In this section we show that, in contrast to deep ReLU networks, shallow ReLU networks relatively inefficiently approximate sufficiently smooth ($C^2$) nonlinear functions. We remark that Liang and Srikant \citeyear{liang2016why} prove a similar result assuming global convexity instead of smoothness and nonlinearity. 

\begin{theor}\label{th:slow} Let $f\in C^2([0,1]^d)$ be a nonlinear function (i.e., not of the form $f(x_1,\ldots,x_d)\equiv a_0+\sum_{k=1}^d a_k x_k$ on the whole $[0,1]^d$). Then, for any fixed $L$, a depth-$L$ ReLU network approximating $f$ with error $\epsilon\in(0,1)$ must have at least $c\epsilon^{-1/(2(L-2))}$ weights and computation units, with some constant $c=c(f,L)>0.$
\end{theor}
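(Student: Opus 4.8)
The plan is to exploit the fact that a depth-$L$ ReLU network computes a continuous piecewise-linear function whose number of linear pieces is polynomially bounded in the number of units, with the degree of the polynomial growing linearly in $L$. Concretely, if $\eta$ has $U$ computation units arranged in $L$ layers, then the number of distinct linear regions of $\eta$ restricted to any line segment in $[0,1]^d$ is at most $(c\,U)^{L-2}$ for some absolute constant $c$; this is the standard region-counting bound for piecewise-linear networks (each layer can at most multiply the number of breakpoints along a line by a factor proportional to the number of units in that layer, and the input and output layers contribute no nonlinearity). So along a suitable segment, $\eta$ is piecewise linear with at most $N := (cU)^{L-2}$ pieces.

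Next I would use the nonlinearity and $C^2$ smoothness of $f$ to produce a one-dimensional obstruction. Since $f$ is not affine on $[0,1]^d$, there is a point and a direction along which the second derivative of the restriction of $f$ to a line is nonzero; by continuity there is a segment $\Delta \subset [0,1]^d$ and a constant $\kappa>0$ such that $g := f|_\Delta$, viewed as a function on an interval $[\alpha,\beta]$, satisfies $g'' \ge \kappa$ (or $\le -\kappa$) throughout. A strictly convex (or concave) $C^2$ function on an interval of length $\ell$ cannot be approximated in $L^\infty$ by a piecewise-linear function with $N$ pieces to accuracy better than $\sim \kappa \ell^2 / N^2$: on each of the $N$ sub-intervals the best linear approximation of a function with $|g''|\ge\kappa$ incurs error at least $\frac{\kappa}{8}(\text{length})^2$, and by convexity of $t\mapsto t^2$ (or a pigeonhole argument choosing the longest sub-interval, of length $\ge \ell/N$) the maximum over sub-intervals of this quantity is at least $c' \kappa \ell^2 / N^2$. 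Hence any piecewise-linear function with $N$ pieces on $\Delta$ has $L^\infty$-error at least $c'' \kappa \ell^2 / N^2$ against $g$, and therefore against $f$ on all of $[0,1]^d$.

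Combining the two facts: an $\epsilon$-approximation forces $c'' \kappa \ell^2 / N^2 \le \epsilon$, i.e. $N \ge c''' \epsilon^{-1/2}$ with $c'''=c'''(f)$, and since $N=(cU)^{L-2}$ this gives $U \ge c^{-1}(c''')^{1/(L-2)}\epsilon^{-1/(2(L-2))}$, which is the claimed lower bound on the number of computation units; the bound on weights follows since the number of weights is at least the number of units.

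The main obstacle I anticipate is making the region-counting step fully rigorous in the form I need — namely, getting a clean bound of the form $(cU)^{L-2}$ for the number of linear pieces along a line, rather than the more commonly quoted bounds phrased in terms of per-layer widths or total parameters, and handling the convention (depth $=$ number of layers, so $L-2$ hidden-to-output nonlinear "stages") correctly. One has to argue that restricting to a one-dimensional segment reduces the problem to counting breakpoints of a composition of $L-2$ piecewise-linear maps, and that the $j$-th layer with $U_j$ units can create at most $U_j$ new breakpoints per existing linear piece, yielding $\prod_j (U_j+1) \le (U/(L-2)+1)^{L-2} \le (cU)^{L-2}$ by AM–GM. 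The convexity/approximation-theory half is routine once the segment $\Delta$ and the uniform bound $|g''|\ge\kappa$ are fixed.
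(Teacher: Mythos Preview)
Your proposal is correct and follows essentially the same route as the paper's proof: reduce to a one-dimensional segment on which the restriction $g$ of $f$ has $g''\ge\kappa>0$, bound the number of linear pieces of the restricted network by $(cU)^{L-2}$, pick the longest piece by pigeonhole, and use the elementary fact that a linear function cannot approximate a uniformly convex function on an interval of length $h$ to accuracy better than $\sim\kappa h^2$. The paper invokes Telgarsky's counting lemma (refined to the exponent $L-2$ and to the total unit count $U$, with skip connections allowed) to get the bound $(2U)^{L-2}$, whereas you propose to prove it directly via the per-layer recursion $M_j\le(1+U_j)M_{j-1}$; both yield the same final inequality, and in fact your recursion already gives $(2U)^{L-2}$ without AM--GM, since $U_j+1\le 2U$ for each hidden layer.
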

\begin{proof}
Since $f\in C^2([0,1]^d$ and $f$ is nonlinear, we can find $\mathbf x_0\in [0,1]^d$ and $\mathbf v\in\mathbb R^d$ such that $\mathbf x_0+x\mathbf v\in[0,1]^d$ for all $x\in [-1,1]$ and the function $f_1:x\mapsto f(\mathbf x_0+x\mathbf v)$ is strictly convex or concave on $[-1,1]$. Suppose without loss of generality that it is strictly convex: 
\begin{equation}\label{eq:d2}
\min_{x\in [-1,1]} f_1''(x)=c_1>0.
\end{equation}
Suppose that $\widetilde f$ is an $\epsilon$-approximation of function $f$, and let $\widetilde f$ be implemented by a ReLU network $\eta$ of depth $L$. Let $\widetilde f_1:x\mapsto \widetilde f(\mathbf x_0+x\mathbf v)$. Then $\widetilde f_1$ also approximates $f_1$ with error not larger than $\epsilon$. Moreover, since $\widetilde f_1$ is obtained from $\widetilde f$ by a linear substitution $\mathbf x=\mathbf x_0+x\mathbf v$, $\widetilde f_1$ can be implemented by a ReLU network $\eta_1$ of the same depth $L$ and with the number of units and weights not larger than in $\eta$ (we can obtain $\eta_1$ from $\eta$ by replacing the input layer in $\eta$ with a single unit, accordingly modifying the input connections, and adjusting the weights associated with these connections). It is thus sufficient to establish the claimed bounds for $\eta_1$.

By construction, $\widetilde f_1$ is a continuous piece-wise linear function of $x$. Denote by $M$ the number of linear pieces in $\widetilde f_1$. We will use the following counting lemma. 
\begin{lemma}\label{th:count} $M\le (2U)^{L-2}$, where $U$ is the number of computation units in $\eta_1$.
\end{lemma}
\begin{proof} This bound, up to minor details, is proved in Lemma 2.1 of \cite{telgarsky2015representation}. Precisely, Telgarsky's lemma states that if a network has a single input, connections only between neighboring layers, at most $m$ units in a layer, and a piece-wise linear activation function with $t$ pieces, then the number of linear pieces in the output of the network is not greater than $(tm)^L$. By examining the proof of the lemma we see that it will remain valid for networks with connections not necessarily between neighboring layers, if we replace $m$ by $U$ in the expression $(tm)^L$. Moreover, we can slightly strengthen the bound by noting that in the present paper the input and output units are counted as separate layers,  only units of layers 3 to $L$ have multiple incoming connections, and the activation function is applied only in layers 2 to $L-1$. By following Telgarsky's arguments, this gives the slightly more accurate bound $(tU)^{L-2}$ (i.e., with the power $L-2$ instead of $L$). It remains to note that the ReLU activation function corresponds to $t=2$.   
\end{proof}

Lemma \ref{th:count} implies that there is an interval $[a,b]\subset [-1,1]$ of length not less than $2(2U)^{-(L-2)}$ on which the function $\widetilde f_1$ is linear. Let $g=f_1-\widetilde f_1.$ Then, by the approximation accuracy assumption, $\sup_{x\in[a,b]}|g(x)|\le \epsilon$, while by \eqref{eq:d2} and by the linearity of $\widetilde f_1$ on $[a,b]$, $\max_{x\in [a,b]} g''(x)\ge c_1>0.$ It follows that $\max(g(a),g(b))\ge g(\frac{a+b}{2})+\frac{c_1}{2}(\frac{b-a}{2})^2$ and hence 
$$\epsilon\ge \frac{1}{2}\big(\max(g(a),g(b))-g(\tfrac{a+b}{2})\big)\ge\frac{c_1}{4}\Big(\frac{b-a}{2}\Big)^2\ge \frac{c_1}{4}(2U)^{-2(L-2)}, $$
which implies the claimed bound $U\ge \frac{1}{2}(\frac{4\epsilon}{c_1})^{-1/(2(L-2))}$. Since there are at least as many weights as computation units in a network, a similar bound holds for the number of weights.
\end{proof}

\section{Discussion}\label{sec:discus}
We discuss some implications of the obtained bounds.
\paragraph{Deep vs. shallow ReLU approximations of smooth functions.} Our results clearly show that deep ReLU networks more efficiently express smooth functions than shallow ReLU networks. By Theorem \ref{th:gensmooth}, functions from the Sobolev space $\mathcal W^{n,\infty}([0,1]^d)$ can be $\epsilon$-approximated by ReLU networks with depth $O(\ln(1/\epsilon))$ and the number of computation units $O(\epsilon^{-d/n}\ln(1/\epsilon))$. In contrast, by Theorem \ref{th:slow}, a nonlinear function from $C^2([0,1]^d)$ cannot be $\epsilon$-approximated by a ReLU network of fixed depth $L$ with the number of units less than $c\epsilon^{-1/(2(L-2))}.$ In particular, it follows that in terms of the required number of computation units, unbounded-depth approximations of functions from $\mathcal W^{n,\infty}([0,1]^d)$ are asymptotically strictly more efficient than approximations with a fixed depth $L$ at least when $$\frac{d}{n}<\frac{1}{2(L-2)}$$ (assuming also $n>2$, so that $\mathcal W^{n,\infty}([0,1]^d)\subset C^2([0,1]^d)$).  
The efficiency of depth is even more pronounced for very smooth functions such as polynomials, which can be implemented by deep networks using only $O(\ln(1/\epsilon))$ units (cf. Propositions \ref{th:x2} and \ref{th:fg} and the proof of Theorem \ref{th:gensmooth}).  Liang and Srikant describe in \cite{liang2016why} some conditions on the approximated function (resembling conditions of local analyticity) under which complexity of deep $\epsilon$-approximation is $O(\ln^c(1/\epsilon))$ with a constant power $c$.  
\paragraph{Continuous model selection vs. function-dependent network architectures.} When approximating a function by a neural network, one can either view the network architecture as fixed and only tune the weights, or optimize the architecture as well. Moreover, when tuning the weights, one can either require them to continuously depend on the approximated function or not. We naturally expect that more freedom in the choice of the approximation should lead to higher expressiveness. 

Our bounds confirm this expectation to a certain extent. Specifically, the complexity of $\epsilon$-approximation of functions from the unit ball $F_{1,1}$ in $\mathcal W^{1,\infty}([0,1])$ is lower bounded by $\frac{c}{\epsilon}$ in the scenario with a fixed architecture and continuously selected weights (see Theorem \ref{th:dwidth}). On the other hand, we show in Theorem \ref{th:faster} that this complexity is upper bounded by $O(\frac{1}{\epsilon\ln(1/\epsilon)})$ if we are allowed to adjust the network architecture. This bound is achieved by finite-depth (depth-6) ReLU networks using the idea of reused subnetworks familiar from the theory of Boolean circuits \cite{shannon1949synthesis}. 

In the case of fixed architecture, we have not established any evidence of complexity improvement for unconstrained weight selection compared to continuous weight selection. We remark however that, already for approximations with depth-3 networks, the optimal weights are known to  discontinuously depend, in general, on the approximated function (\cite{kainen1999approximation}). On the other hand, part b) of Theorem \ref{th:vc} shows that if the network depth scales as $O(\ln^p(1/\epsilon))$, discontinuous weight selection cannot improve the continuous-case complexity more than by a factor being some power of $\ln(1/\epsilon)$.
\paragraph{Upper vs. lower complexity bounds.} We indicate the gaps between respective upper and lower bounds in the three scenarios mentioned above: fixed architectures with continuous selection of weights, fixed architectures with unconstrained selection of weights, or adaptive architectures. 

For fixed architectures with continuous selection the lower bound $c\epsilon^{-d/n}$ is provided by Proposition \ref{th:dwidth}, and the upper bound $O(\epsilon^{-d/n}\ln(1/\epsilon))$ by Theorem \ref{th:gensmooth}, so these bounds are tight up to a factor $O(\ln(1/\epsilon))$. 

In the case of fixed architecture but unconstrained selection, part b) of Theorem \ref{th:vc} gives a lower bound $c\epsilon^{-d/n}\ln^{-2p-1} (1/\epsilon)$ under assumption that the depth is constrained by $O(\ln^p(1/\epsilon))$. This is only different by a factor of $O(\ln^{2p+2}(1/\epsilon))$ from the upper bound of Theorem \ref{th:gensmooth}. Without this depth constraint we only have the significantly weaker bound $c\epsilon^{-d/(2n)}$ (part a) of Theorem \ref{th:vc}). 

In the case of adaptive architectures, there is a big gap between our upper and lower bounds. The upper bound $O(\frac{1}{\epsilon\ln(1/\epsilon)})$ is given by Theorem \ref{th:faster} for $d=n=1$. The lower bound, proved for general $d,n$ in Theorem \ref{th:lb_funcdep}, only states that there are $f\in\mathcal W^{n,\infty}([0,1]^d)$ for which the complexity is not $o(\epsilon^{-d/(9n)})$.

\paragraph{ReLU vs. smooth activation functions.} A popular general-purpose method of approximation is shallow (depth-3) networks with smooth activation functions (e.g., logistic sigmoid). Upper and lower approximation complexity bounds for  these networks (\cite{mhaskar1996neural, maiorov2000near}) show that complexity scales as $\sim \epsilon^{-d/n}$ up to some $\ln(1/\epsilon)$ factors. Comparing this with our bounds in Theorems \ref{th:gensmooth},\ref{th:faster},\ref{th:vc}, it appears that deep ReLU networks are roughly (up to $\ln(1/\epsilon)$ factors) as expressive as shallow networks with smooth activation functions. 
 
\paragraph{Conclusion.} We have established several upper and lower bounds for the expressive power of deep ReLU networks in the context of approximation in Sobolev spaces. We should note, however, that this setting may not quite reflect typical real world applications, which usually possess symmetries and hierarchical and other structural properties substantially narrowing the actually interesting classes of approximated functions (\cite{lecun2015deep}). Some recent publications introduce and study expressive power of deep networks in frameworks bridging this gap, in particular, graph-based hierarchical approximations are studied in \cite{mhaskar2016learning, mhaskar2016deep} and convolutional arithmetic circuits in \cite{cohen2015expressive}. Theoretical analysis of expressiveness of deep networks taking into account such properties of real data seems to be an important and promising direction of future research.

\section*{Acknowledgments} The author thanks Matus Telgarsky and the anonymous referees for multiple helpful comments on the preliminary versions of the paper. The research was funded by Skolkovo Institute of Science and Technology.

\bibliographystyle{plain}
\bibliography{relu}

\end{document}